\def\1{\bm{1}}
\DeclareMathAlphabet{\mathsfit}{\encodingdefault}{\sfdefault}{m}{sl}
\SetMathAlphabet{\mathsfit}{bold}{\encodingdefault}{\sfdefault}{bx}{n}
\newcommand{\E}{\mathbb{E}}
\newcommand{\R}{\mathbb{R}}
\DeclareMathOperator*{\argmin}{arg\,min}
\newcommand{\LSP}{\textup{LSP}}
\newcommand{\algo}{NGDiff\xspace}
\newcommand{\g}{\bm{g}}
\renewcommand{\d}{\bm{d}}
\renewcommand{\H}{\mathbf{H}}
\renewcommand{\R}{\textup{R}}
\newcommand{\F}{\textup{F}}
\newcommand{\G}{\mathbf{G}}
\newcommand*\bigcdot{\mathpalette\bigcdot@{2}}
\newcommand*\bigcdot@[2]{\mathbin{\vcenter{\hbox{\scalebox{#2}{$\m@th#1\bullet$}}}}}
\newtheorem{theorem}{Theorem}
\newtheorem{othertheorem}{othertheorem}[section]
\newtheorem{lemma}[theorem]{Lemma}
\newtheorem{remark}[othertheorem]{Remark}
\newtheorem{definition}[theorem]{Definition}
\title{Unlearning as multi-task optimization: A normalized gradient difference approach with an adaptive learning rate}
\author{Zhiqi Bu\thanks{Equal contribution. Work done during Xiaomeng Jin's internship at Amazon. Corresponding author: \texttt{zhiqibu@amazon.com}} \\
  Amazon AGI\\
  \and
  Xiaomeng Jin$^*$ \\
UIUC\\
\and
  Bhanukiran Vinzamuri \\
Amazon AGI\\
\and
  Anil Ramakrishna \\
Amazon AGI\\
\and
  Kai-Wei Chang\thanks{Concurrent positions as an Amazon Scholar and as a faculty at the corresponding institutes. This paper represents the work performed at Amazon.} \\
Amazon AGI \& UCLA\\
\and
  Volkan Cevher$^\dagger$ \\
Amazon AGI \& LIONS EPFL\\
\and
  Mingyi Hong$^\dagger$ \\
Amazon AGI \& University of Minnesota\\
}
\date{}
\begin{document}
\maketitle

\begin{abstract}

\if 0
Unlearning techniques have been proposed as a cost-effective post-training approach to remove undesired knowledge learned by large language models (LLMs). However, existing methods often fail to effectively unlearn the targeted information or cause a significant performance drop in common benchmarks. In this paper, we frame machine unlearning as a multi-task optimization problem to balance this tradeoff -- one task maximizes forgetting loss, while the other minimizes retaining loss. We introduce a novel unlearning method, Normalized Gradient Difference (NGDiff), which guarantees Pareto optimality upon convergence. Specifically, NGDiff dynamically normalizes task gradients, enabling the model to unlearn targeted forgetting data while preserving utility on the retaining set. We also identified that unlearning methods are sensitive to learning rate and integrate an automatic learning rate scheduler that selects the locally optimal learning rate to stabilize and accelerate the convergence. Experiments with various LLMs demonstrate that NGDiff outperforms state-of-the-art unlearning methods on the \textit{TOFU} and \textit{MUSE} datasets.
\fi 

Machine unlearning has been used to remove unwanted knowledge acquired by large language models (LLMs). In this paper, we examine machine unlearning from an optimization perspective, framing it as a \emph{regularized multi-task optimization problem}, where one task optimizes a forgetting objective and another optimizes the model performance. In particular, we introduce a normalized gradient difference (\algo) algorithm, enabling us to have better control over the trade-off between the objectives, while integrating a new, automatic learning rate scheduler. We provide a theoretical analysis and empirically demonstrate the superior performance of \algo among state-of-the-art unlearning methods on the \textit{TOFU} and \textit{MUSE} datasets while exhibiting stable training. 
\end{abstract}

\section{Introduction}

Large language models (LLMs) consume a large amount of data during pre-training. 
After the model is built, we may have to unlearn certain data points that contain potentially sensitive, harmful, or copyrighted content. As re-training from scratch in such a case is not feasible due to the associated costs,  researchers have developed a number of machine unlearning methods applied after training.

\if 0
Large language models (LLMs) are trained on a huge collection of data from various sources, such as books and websites. For example, LLAMA3 is pre-trained on over 15T tokens \cite{dubey2024llama}, while Falcon \cite{refinedweb}, OLMo \cite{groeneveld2024olmoacceleratingsciencelanguage} are each pre-trained on over 1T tokens. However, this extensive data pre-training raises serious concerns about data risks for the following reasons: (1) Certain data sources, despite being publicly available, contain potentially harmful or sensitive content, such as nudity, personal information, copyright-protected information, etc. (2) LLMs can memorize these data during training, and then re-generate them.  For example, Exhibit J from The New York Times \cite{exhibitj} shows examples of outputs from GPT-4 that ``contain large spans that are identical to the actual text of the article from The New York Times,'' which are copyrighted; some attacks can extract an individual’s name, email address, phone number, and physical address from LLMs \cite{carlini2021extracting}. Furthermore, the risk of memorization increases with the size of the model (see Figure 1 and 3 in \cite{carliniquantifying}), exposing bigger models with higher utility to greater risks of data memorization.
w
\fi 


\if 
Although retraining the models by removing the problematic data can resolve this issue, this approach is not feasible given that LLMs cost millions of dollars and take months to train, and the cost escalates each time new data are identified as problematic. Recently, researchers have developed a number of machine unlearning methods, which are applied after the models have completed the training and memorized the data. Specifically, we divide the data into two classes: the retaining set (\R), on which the model can memorize and have high performance, and the forgetting set (\F), which the model should not memorize. The goal of unlearning is to continue training the model in a way, so that knowledge from the forgetting set is effectively removed, and that the unlearned model behaves similarly to one that is retrained solely on the retaining set.

\begin{figure}[t]
\centering
\includegraphics[width=0.95\linewidth]{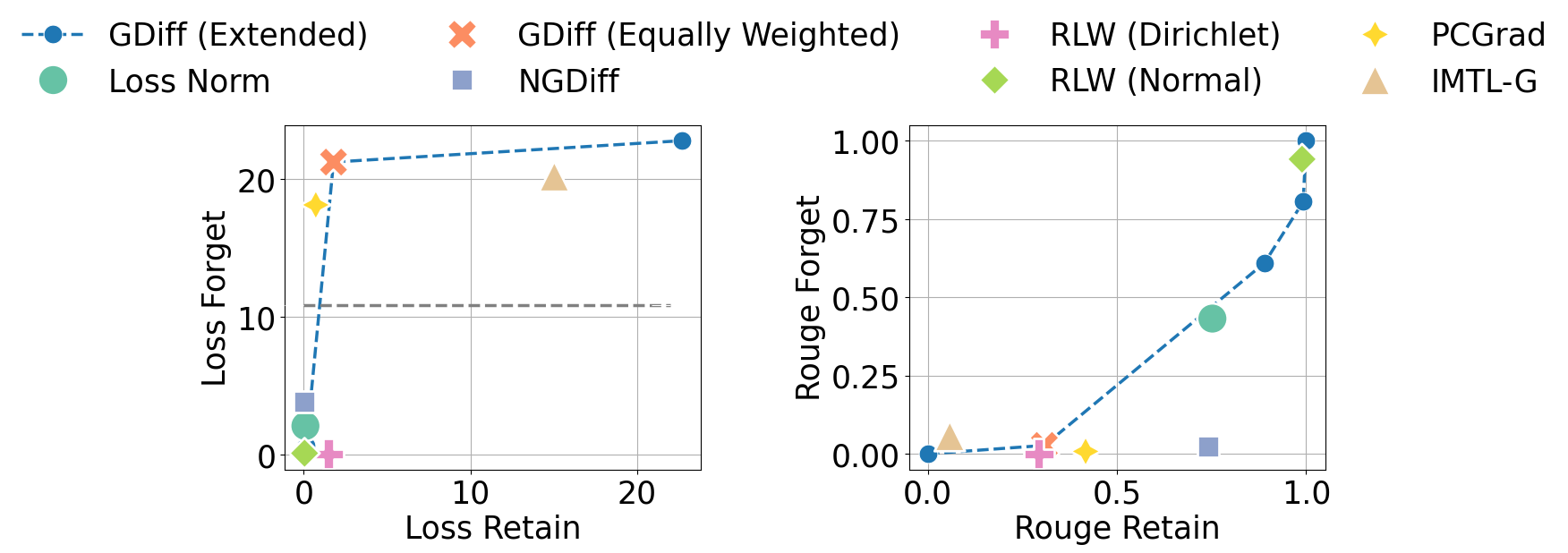}
    \caption{Loss values and ROUGE scores on the forgetting and retaining data from the \textit{TOFU} dataset using different unlearning methods on the Phi-1.5 language model. We apply the \textit{extended GDiff} with various coefficients (see \eqref{eq:static}, $0\leq c\leq 1$) and connect the results with a blue dashed line. We denote MTO methods as different markers, and use a grey dashed line to represent the loss of random guess.}
    \label{fig:intro-example}
\end{figure}
\fi 

\looseness=-1Existing machine unlearning methods are formulated primarily as minimizing memorization through the language model loss~\cite{jang-etal-2023-knowledge, chen2024machine, liu2024rethinking}. 
In particular, the Gradient Ascent (GA) method maximizes the language model (LM) loss (i.e., minimizes the negative LM loss) on the target forget set (\F). However, this approach can also negatively affect the utility of the model. To mitigate the utility loss, the Gradient Difference (GDiff) method selects a subset of the training data as the retain set (\R), minimizing the sum of the negative LM loss on the forgetting set and the standard LM loss on the retaining set. This approach has been empirically shown to effectively preserve the model's performance~\cite{liu2022continual,maini2024tofu}. Similarly, Negative Preference Optimization (NPO)~\cite{zhang2024negative} assigns a lower likelihood of forgetting data, thereby balancing the unlearning performance with model utility. 

Despite these successes, there are still two key issues preventing the methods from reaching their full potential. First, balancing retaining and forgetting losses is difficult (\Cref{fig:intro-example}) given the disproportionate sizes of the forget and retain datasets. Second, the optimization methods for unlearning are usually sensitive to the learning rate (\textit{cf.}, \Cref{sec:preliminary_evidence}, \Cref{fig:grad_diff_lr_rouge}). For instance, various learning rates can lead to substantial changes in the ROUGE scores and loss values even for the same algorithm, making the unlearning methods unstable and difficult to use in practice.

\if 0

However, in our preliminary experiments, we observe two key issues preventing these methods from being practically applied. First, balancing retaining and forgetting losses is difficult. In \Cref{fig:intro-example}, we observe a trade-off between the performance on $\R$ and $\F$, where some methods fail to unlearn $\F$ (points in the upper-right corner of the left figure), and some do not maintain utility in $\R$ (points in the bottom-left corner of the left figure). The  {\color{blue}blue dotted} line in \Cref{fig:intro-example} further illustrates the trade-off in GDiff by sweeping a hyper-parameter $c\in [0,1],$ which is used to balance the losses on the forgetting and retaining data (see Eq. \eqref{eq:static}). Picking an appropriate $c$ to balance the two terms is often challenging. Secondly, the optimization methods for unlearning are usually sensitive to the learning rate. As illustrated in \Cref{fig:grad_diff_lr_rouge}, even for the same algorithm, various learning rates lead to substantial changes in the ROUGE scores and loss values, making the unlearning methods unstable and difficult to use in practice.
\fi 


\if 0
\begin{figure}[!htb]
\centering
\includegraphics[width=0.9\linewidth]{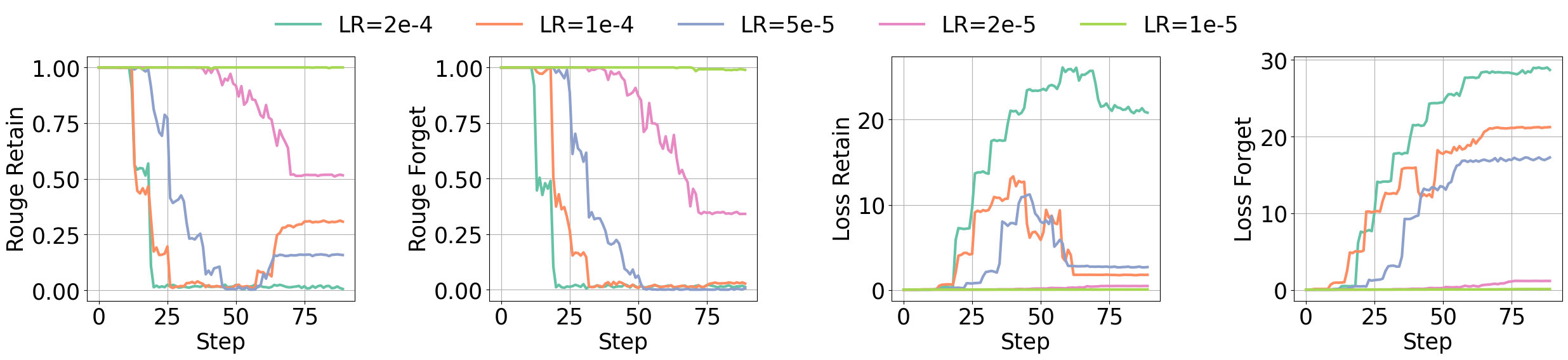}
\caption{ROUGE scores and loss values during unlearning with vanilla GDiff (equally weighted), under different learning rates to which the unlearning performance is highly sensitive.
}
\label{fig:grad_diff_lr_rouge}
\end{figure}

\fi 

In this paper, we carefully examine unlearning from an optimization perspective and formulate it as a multi-task optimization (MTO) problem \cite{chen2021multi, xin2022current}: we aim to minimize the LM loss (i.e., maximize the utility) on the retaining set and maximize the LM loss on the forgetting set (i.e., minimize memorization), simultaneously.\footnote{A naive approach is optimizing the sum of these two objectives. We will discuss alternatives to improve upon this. } To solve this two-task problem, we study the rich literature of multi-task methods that seeks the Pareto optimality of two tasks (e.g., IMTL \cite{liu2021towards}, GradNorm \cite{chen2018gradnorm}, RLW \cite{linreasonable}, PCGrad \cite{yu2020gradient}, and scalarization \cite{boyd2004convex}), and design an approach specifically for the LLM unlearning problem.

\begin{figure}[!htb]
\centering
\includegraphics[width=0.95\linewidth]{figures/all_method_rouge_loss_forget_retain_last_step.png}
    \caption{Loss values and ROUGE scores on the forgetting and retaining data from the \textit{TOFU} dataset using different unlearning methods on the Phi-1.5 language model. We apply the \textit{extended GDiff} with various coefficients (see \eqref{eq:static}, $0\leq c\leq 1$) and connect the results with a blue dashed line. We denote MTO methods as different markers, and use a grey dashed line to represent the loss of random guess.}
    \label{fig:intro-example}
\end{figure}

Inspired by the simplicity and strong empirical performance of linear scalarization methods,\footnote{\cite{xin2022current} demonstrate that linear scalarization outperforms, or is at least on par with, other MTO approaches across various language and vision experiments} which minimize a linearly weighted average of task losses, we propose an LLM unlearning method, \algo, based on dynamic scalarization, and analyze its theoretical properties. Building on the analysis, we introduce an automatic learning rate adaptation method tailored for LLM unlearning.


\if 0
 This is motivated by two reasons: theoretically, linear scalarization is guaranteed to be Pareto optimal (see \Cref{thm:pareto}); 
\fi 
 

 We showcase the effectiveness of our method through extensive experiments on multiple datasets, different LLMs and vision models. For example, on TOFU~\cite{maini2024tofu}, \algo achieves 40\% higher model utility while maintaining comparable unlearning performance with Llama2-7B. Figure \ref{fig:intro-example} highlights the effectiveness of \algo.

\vspace{2mm}
\textbf{Contributions} are summarized as follows:

\begin{compactitem}
    \item We formalize LLM unlearning as a multi-task optimization problem and unify the terminology used across both fields. We demonstrate the Pareto optimality for scalarization-based unlearning methods under some assumptions. 
    \item Through the lens of multi-task optimization, we propose a novel unlearning method \textit{\algo} for LLM unlearning, which uses the gradient norms to {\it dynamically} balance the forget and retain tasks. \algo improves both tasks \textit{simultaneously} and \textit{monotonically} with a proper learning rate scheduling. 
    \item We integrate \algo with GeN \cite{bu2024automatic}, which uses Hessian-based learning rate selection for stable convergence.
\end{compactitem}




\section{Related Work}
We position our work within the related literature. More discussion and background on learning rate-free techniques are in \Cref{app:related}.

\noindent\textbf{LLM unlearning}  The extensive data used in training LLMs raises significant concerns. Certain data sources contain personal information \cite{carlini2021extracting}, outdated knowledge~\cite{wu2024akewassessingknowledgeediting}, and copyright-protected materials~\cite{exhibitj}. In addition, adversarial data attacks can maliciously manipulate training data to embed harmful information~\cite{wallace-etal-2021-concealed,li2024backdoorllm}. 

To remove unwanted information without retraining the entire model, machine unlearning has been proposed using techniques such as data slicing~\cite{bourtoule2021machine}, influence functions~\cite{ullah2021machine}, and differential privacy~\cite{gupta2021adaptive}. However, these methods are challenging to scale to LLMs due to their complexity. Recently, efficient approximate unlearning methods have been proposed for LLMs~\cite{eldan2023whos,zhang2024rightforgotteneralarge,jang-etal-2023-knowledge,pawelczyk2023context,chen-yang-2023-unlearn}. They mostly focus on designing unlearning objectives or hiding unwanted information. However, none addresses the fundamental optimization problem. Our paper bridges this gap and complements existing approaches. 
Further discussion of the challenges surrounding LLM unlearning can be found in benchmarks~\cite{shi2024muse,maini2024tofu} and surveys~\cite{si2023knowledgeunlearningllmstasks,liu2024rethinkingmachineunlearninglarge}.

Note that the literature on knowledge editing~\cite{de-cao-etal-2021-editing} is also relevant. However, model editing typically focuses on surgically updating LLMs for specific knowledge, whereas unlearning removes the influence of particular documents. The techniques presented in this paper could potentially be applied to knowledge editing.



\noindent\textbf{Multi-task optimization}
In NLP, multi-task learning~\cite{zhang-etal-2023-survey} typically refers to building a model that can perform well on multiple tasks simultaneously by sharing representations, introducing constraints, or combining multiple learning objectives. This often involves optimizing a form of static linear scalarization, as introduced in the next section. Multi-task optimization, on the other hand, focuses on a slightly different concept -- optimizing two distinct learning objectives simultaneously. The key challenge is how to balance the trade-off among objectives during the optimization procedure by modifying the per-task gradients (e.g. PCGrad \cite{yu2020gradient}, RLW \cite{linreasonable}, IMTL \cite{liu2021towards}). Several recent works have studied Pareto frontier and optimally in context of NLP tasks (e.g., multi-lingual machine translation~\cite{Chen2023OnTP,xin2022current} and NLP fairness~\cite{han-etal-2023-fair}). While optimization is often treated as a black-box tool in NLP research, studying optimization provides deeper insights and inspires new algorithms.\footnote{An example is the Baum-Welch algorithm, originally proposed to estimate the parameters of HMMs and applied in speech recognition in 1970s before it was recognized as an instance of the EM algorithm~\cite{DEMP1977}. It was later identified as a special case of a broader class of convex-concave optimization (CCCP)~\cite{NIPS2001_a0128693}. This connection inspired new designs, such as the unified EM~\cite{samdani-etal-2012-unified}.} 


\section{Unlearning as multi-task optimization}
This section casts machine unlearning as a multi-task  optimization (MTO), specifically the two-task optimization problem. Let the retain set be denoted by $\R$ and the forget set by $\F$, with $L_{\R}$ and $L_{\F}$ representing the corresponding cross-entropy losses for language modeling. We are interested in finding
\begin{equation}
\label{eq:mto-obj}
\arg\min\nolimits_\theta L_\R(\bm\theta) \quad \cap \quad \arg\max\nolimits_\theta L_\F(\bm\theta),
\end{equation}
where 
$\bm\theta$ represents the model parameters. 

There might not be a solution that simultaneously achieves both objectives in Eq. \eqref{eq:mto-obj}. For LLMs, the unlearning solutions generally exhibit a trade-off between performance in R and F (\textit{cf.}, \Cref{fig:intro-example}). To forget $\F$, one may unavoidably unlearn general knowledge such as grammar rules on $\F$, which can sacrifice the performance on $\R$.

In MTO, Pareto optimality is used to characterize the trade-offs between multiple objectives. In layperson's terms, if $\theta$ is Pareto optimal, it is impossible to improve $L_{\R}$ or $L_{\F}$ without worsening the other. Formal definition is in below:


\begin{definition}[Pareto optimality in unlearning]
\label{def:Pareto}
\looseness=-1For two models $\bm\theta$ and $\bm\theta'$, if $L_\R(\bm\theta)\geq L_\R(\bm\theta')$ and $ L_\F(\bm\theta)\leq L_\F(\bm\theta')$ with at least one inequality being strict, then $\bm\theta$ is dominated by $\bm\theta'$. A model is Pareto optimal if it is not dominated by any other models.
\end{definition}

In the remainder of this section, we will discuss current unlearning methods in a unified MTO framework and analyze their Pareto optimality. Building on this, we then propose a dynamic scalarization approach tailored to LLM unlearning. 


\subsection{Static linear scalarization}

A popular MTO method is scalarization, which addresses MTO by optimizing the linear scalarization problem (LSP). This method combines multiple tasks into a single, reweighted task: 
\begin{align}
\LSP(\bm\theta; c)=c\cdot L_\R(\bm\theta)-(1-c)\cdot L_\F(\bm\theta),
\label{eq:lsp}
\end{align}
where $c$ is  fixed. At iteration $t$, the gradient of LSP, $\g_\text{static}(\bm\theta_t; c)$, lies within the linear span of per-task gradients as shown in \Cref{fig:span} (yellow area):
\begin{align}
\g_\text{static}(\bm\theta_t; c) = \frac{\partial\LSP}{\partial \bm\theta_t}= c \cdot \g_\R (\bm\theta_t)-(1-c)\cdot \g_\F(\bm\theta_t).
\label{eq:static}
\end{align}
Then, the corresponding update rule by the (stochastic) gradient method is \\
\begin{align}
    \bm\theta_{t+1}=\bm\theta_t-\eta_t[c\cdot \g_\R(\bm\theta_t)-(1-c)\cdot \g_\F(\bm\theta_t)].
\end{align}

\begin{remark}
We term the static linear scalarization as the \textbf{extended GDiff} in this work. Some existing methods are special cases of extended GDiff. For example, Gradient Descent (GD) on retaining set is equivalent to extended GDiff with $c=1$. Gradient Ascent (GA) on forgetting is equivalent to that with $c=0$,
and vanilla GDiff~\cite{liu2022continual} set $c=0.5$ (i.e., equally weighted). 
\end{remark}

\begin{figure}[t]
\centering
    \includegraphics[width=0.6\linewidth]{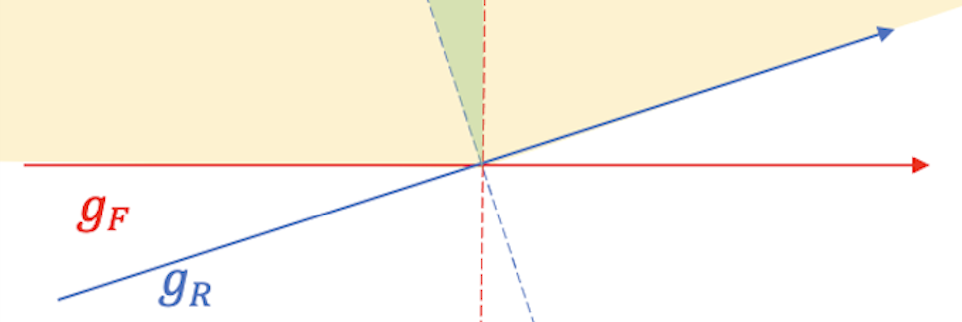}
    \caption{Gradient space in 2-dimension. $\color{red}\g_\F$ is the forgetting gradient and $\color{blue}\g_\R$ is the retaining gradient, each with a perpendicular dashed line. \textcolor{yellow}{Yellow area} is the linear span (Eq. \eqref{eq:static}) by scalarization. \textcolor{green}{Green area} is positively correlated to $\g_\R$ and negatively correlated to $\g_\F$ by Eq. \eqref{eq:sanity check}, whereas NGDiff always stays within this green area at each iteration by \Cref{fact:gnorm good}.}
    \label{fig:span}
\end{figure}

A nice property of linear scalarization is the Pareto optimality at the convergence of models, which we state in \Cref{thm:pareto} (proof in \Cref{app:proofs}) for the static $c$ and later extend to \Cref{thm:pareto!} for the dynamic $c_t$ in \Cref{section:pareto}.

\begin{lemma}[restated from \cite{xin2022current}]
For any $0<c<1$, the model $\bm\theta^*_\text{LSP}(c)\in\argmin_\theta \LSP(\bm\theta; c)$ is Pareto optimal.
\label{thm:pareto}
\end{lemma}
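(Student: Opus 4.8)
The plan is to argue by contradiction using the definition of Pareto optimality (\Cref{def:Pareto}). Suppose $\bm\theta^* := \bm\theta^*_\text{LSP}(c)$ is a global minimizer of $\LSP(\cdot; c)$ but is \emph{not} Pareto optimal. Then there exists some $\bm\theta'$ that dominates $\bm\theta^*$, i.e., $L_\R(\bm\theta') \le L_\R(\bm\theta^*)$ and $L_\F(\bm\theta') \ge L_\F(\bm\theta^*)$, with at least one inequality strict. The idea is to plug $\bm\theta'$ into the scalarized objective and show $\LSP(\bm\theta'; c) < \LSP(\bm\theta^*; c)$, contradicting the minimality of $\bm\theta^*$.

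The key computation is straightforward: since $0 < c < 1$, both $c > 0$ and $1-c > 0$. From $L_\R(\bm\theta') \le L_\R(\bm\theta^*)$ we get $c\, L_\R(\bm\theta') \le c\, L_\R(\bm\theta^*)$, and from $L_\F(\bm\theta') \ge L_\F(\bm\theta^*)$ we get $-(1-c) L_\F(\bm\theta') \le -(1-c) L_\F(\bm\theta^*)$. Adding these,
\begin{equation}
\LSP(\bm\theta'; c) = c\, L_\R(\bm\theta') - (1-c) L_\F(\bm\theta') \le c\, L_\R(\bm\theta^*) - (1-c) L_\F(\bm\theta^*) = \LSP(\bm\theta^*; c).
\end{equation}
To upgrade this to a strict inequality, I would use the fact that at least one of the two defining inequalities is strict, together with the strict positivity of the corresponding coefficient ($c$ or $1-c$): if $L_\R(\bm\theta') < L_\R(\bm\theta^*)$ then $c\, L_\R(\bm\theta') < c\, L_\R(\bm\theta^*)$ strictly; symmetrically for the forgetting term. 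In either case $\LSP(\bm\theta'; c) < \LSP(\bm\theta^*; c)$, which contradicts $\bm\theta^* \in \argmin_\theta \LSP(\bm\theta; c)$. Hence no dominating $\bm\theta'$ exists, and $\bm\theta^*$ is Pareto optimal.

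There is no real obstacle here — the statement is essentially the classical fact that minimizers of a positively-weighted scalarization are Pareto optimal, and the only thing to be careful about is that the weights are \emph{strictly} positive (which is exactly why the hypothesis is $0 < c < 1$ rather than $0 \le c \le 1$); at the endpoints $c = 0$ or $c = 1$ the argument breaks because one task is ignored, so a minimizer could be dominated in the other coordinate. The one subtlety worth a sentence in the writeup is to confirm that $\argmin_\theta \LSP(\bm\theta; c)$ is interpreted as global minimizers (so that $\bm\theta'$ being feasible gives a genuine contradiction); if only local minimality were assumed the statement would need a convexity hypothesis, but as stated the proof goes through verbatim.
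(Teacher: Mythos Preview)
Your proposal is correct and follows essentially the same contradiction argument as the paper: assume $\bm\theta^*$ is dominated by some $\bm\theta'$, then plug into the scalarized objective to obtain $\LSP(\bm\theta';c) < \LSP(\bm\theta^*;c)$, contradicting minimality. Your version is slightly more explicit about why the strict inequality follows from $0<c<1$, but the approach is identical.
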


\Cref{thm:pareto} suggests\footnote{We note that \Cref{thm:pareto} is only applicable to the global minimum of LSP, which is not always achievable. While this result has its limitations and requires empirical validation, it provides guidance for algorithm design.}
 that we can sweep through $c\in [0,1]$ and construct the Pareto frontier after sufficiently long training time (e.g., the blue dotted line in \Cref{fig:intro-example}). However, 
 while any $c$ leads to a Pareto optimal point, the solution may be useless: e.g., perfect memorization on $(\R, \F)$ that fails to unlearn is also Pareto optimal. Next, we investigate different choices of c by extending the static scalarization in \eqref{eq:static}.



\subsection{Dynamic scalarization}
\label{section:pareto}
Static scalarization uses a constant $c$ in \eqref{eq:static}. However, we can extend it to use different scalars at different iteration:
\begin{align}
\bm\theta_{t+1}&=\bm\theta_t-\eta_t \g_\text{UN}(\bm\theta_t;c_t), \text{ where }
\nonumber \\
\g_\text{UN}(\bm\theta;c_t)&:=c_t\cdot \g_\R(\bm\theta)-(1-c_t)\cdot \g_\F(\bm\theta).
\label{eq:dynamic gradient}
\end{align}

It is worth noting that instead of defining $\bm\theta^*=\text{argmin}_\theta \LSP$ at the loss level, we can define it at the gradient level based on the stationary condition of the training dynamics, i.e., $\g_\text{UN}(\bm\theta^*)=\bm{0}$. 




Several unlearning and MTO methods can be viewed as special cases of Eq. \eqref{eq:dynamic gradient}: 
\begin{compactenum}
    \item Gradient descent (GD on $\R$), $c_t=1$
    \item Gradient ascent (GA on $\F$), $c_t=0$
    \item Gradient difference (vanilla GDiff), $c_t=0.5$
    \item Loss normalization (LossNorm), $\frac{c_t}{1-c_t}=\frac{L_\F}{L_\R}$
    \item RLW~\cite{linreasonable}, $c_t=\frac{e^{\lambda_1}}{e^{\lambda_1}+e^{\lambda_2}}$ with $\lambda_i\sim N(0,1)$
    \item PCGrad~\cite{yu2020gradient}, $\frac{c_t}{1-c_t}=1+\frac{\g_\F^\top\g_\R}{\|g_\R\|^2}$
    \item IMTL-G~\cite{liu2021towards}, $c_t = \g_\F^\top(\frac{\g_\F}{\|\g_\F\|} - \frac{\g_\R}{\|\g_\R\|})/(\g_\F - \g_\R)^\top(\frac{\g_\F}{\|\g_\F\|} - \frac{\g_\R}{\|\g_\R\|})$
\end{compactenum}

Despite the different designs of $\{c_t\}$, we show in \Cref{thm:pareto!}  (proof in \Cref{app:proofs}) that all $\bm\theta^*(\{c_t\})$ are Pareto optimal following \Cref{thm:pareto}, including our NGDiff to be introduced in \Cref{sec:grad norm}. 

\begin{theorem}
\label{thm:pareto!}
For any $\{c_t\}$ with $0\leq c_t\leq 1$ that converges as $t\to\infty$, the model $\bm\theta^*(\{c_t\}):=\lim_{t\to\infty} \bm\theta_t$ in \eqref{eq:dynamic gradient} is Pareto optimal.
\end{theorem}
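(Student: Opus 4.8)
The plan is to reduce Theorem~\ref{thm:pareto!} to Lemma~\ref{thm:pareto} by showing that a limit point $\bm\theta^*(\{c_t\})$ of the dynamic iteration \eqref{eq:dynamic gradient} must be a stationary point of some \emph{static} scalarization problem $\LSP(\cdot; c^*)$ with $c^* = \lim_{t\to\infty} c_t$. First I would invoke the hypothesis that $c_t \to c^*$ and that $\bm\theta_t \to \bm\theta^*$, and pass to the limit in the stationarity condition: since $\g_\text{UN}(\bm\theta_t; c_t) \to 0$ along the trajectory (this is the implicit fixed-point/convergence assumption, consistent with the remark that $\bm\theta^*$ is defined by $\g_\text{UN}(\bm\theta^*) = \bm 0$), continuity of $\g_\R, \g_\F$ in $\bm\theta$ gives $c^* \g_\R(\bm\theta^*) - (1-c^*)\g_\F(\bm\theta^*) = \bm 0$, i.e. $\bm\theta^*$ is a stationary point of $\LSP(\cdot; c^*)$.

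Next I would split into cases on $c^*$. If $0 < c^* < 1$, then $\bm\theta^*$ is (under the convexity-type assumptions implicit in Lemma~\ref{thm:pareto}, or treating ``$\argmin$'' as ``stationary point'') a minimizer of $\LSP(\cdot; c^*)$, and Lemma~\ref{thm:pareto} applies verbatim to give Pareto optimality. The boundary cases $c^* \in \{0,1\}$ need a separate argument, since Lemma~\ref{thm:pareto} explicitly excludes them: here I would argue directly from Definition~\ref{def:Pareto}. For $c^* = 1$ we have $\g_\R(\bm\theta^*) = \bm 0$, so $\bm\theta^*$ minimizes $L_\R$; any $\bm\theta'$ dominating it would need $L_\R(\bm\theta') \le L_\R(\bm\theta^*)$, hence $L_\R(\bm\theta') = L_\R(\bm\theta^*)$, and then the domination requires $L_\F(\bm\theta') < L_\F(\bm\theta^*)$ — but one would need to rule this out, which in general requires that $\bm\theta^*$ already maximizes $L_\F$ among global minimizers of $L_\R$; this is the one place the statement as written looks slightly too strong unless one adds a nondegeneracy assumption or reads the claim as a stationarity-type (local Pareto) statement. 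The symmetric remark applies to $c^* = 0$.

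The core of the write-up is therefore the limiting argument; I would present it as: (i) existence of $\bm\theta^* = \lim \bm\theta_t$ is assumed; (ii) the training dynamics force $\eta_t \g_\text{UN}(\bm\theta_t; c_t) \to 0$, and with the standing smoothness of $L_\R, L_\F$ (so that $\g_\R, \g_\F$ are continuous) and $c_t \to c^*$, we get $\g_\text{UN}(\bm\theta^*; c^*) = \bm 0$; (iii) invoke Lemma~\ref{thm:pareto} for interior $c^*$ and a direct argument for $c^* \in \{0,1\}$. The main obstacle I expect is making step (ii) rigorous: extracting $\g_\text{UN}(\bm\theta^*;c^*)=\bm 0$ from convergence of $\bm\theta_t$ is immediate only if the step sizes are bounded away from $0$; if $\eta_t \to 0$ one instead needs that $\bm\theta_t$ converges to a genuine stationary point (not merely an accumulation point of a slowing-down process), so I would state the needed assumption on $\{\eta_t\}$ (or simply adopt the paper's convention that $\bm\theta^*$ is \emph{defined} by the stationarity condition $\g_\text{UN}(\bm\theta^*)=\bm 0$, in which case step (ii) is free and the whole proof collapses to the case split plus an appeal to Lemma~\ref{thm:pareto}). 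The secondary obstacle is the boundary-$c^*$ gap noted above, which I would handle by adding the mild assumption that any accumulation point with $c^* \in \{0,1\}$ is also stationary for the other objective, or by restricting the claim to $0 < c^* < 1$.
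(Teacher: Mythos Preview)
Your approach is essentially the same as the paper's: let $c^*=\lim_t c_t$, pass to the limit in the stationarity condition to get $c^*\g_\R(\bm\theta^*)-(1-c^*)\g_\F(\bm\theta^*)=\bm 0$, identify $\bm\theta^*$ with the LSP solution at $c^*$, and invoke Lemma~\ref{thm:pareto}. The paper's proof is a two-line sketch that does exactly this, without your case split or your discussion of step-size assumptions; in particular it silently conflates ``stationary point'' with ``$\argmin$'' and does not separately treat $c^*\in\{0,1\}$, so the gaps you flagged are real but are simply not addressed in the original.
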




\section{Unlearning with normalized gradient difference}
While \Cref{thm:pareto!} shows the Pareto optimality of $\bm\theta^*$ as $t\to\infty$, it does not shed insight on the convergence through intermediate steps $\bm\theta_t$. Put differently, although many MTO and unlearning methods are all Pareto optimal upon convergence, they may converge to different Pareto points at different convergence speeds. Therefore, it is important to understand and control the algorithm dynamics to maintain high performance for $\R$ throughout the training. Specifically, the dynamics are determined by the choices of $\g_\text{UN}\in\mathbb{R}^d$ and $\eta_t\in\mathbb{R}$ in Eq. \eqref{eq:dynamic gradient}. 

In this section, we propose to use gradient normalization for $\g_\text{UN}$ and automatic learning rate for $\eta_t$, so as to achieve stable convergence, effective unlearning, high retaining utility, without manually tuning the learning rate.

\subsection{Loss landscape of unlearning}
Applying the Taylor expansion on Eq. \eqref{eq:dynamic gradient}, we can view the local landscapes of loss $L_\R$ and $L_\F$ as quadratic functions, where
\begin{equation}\label{eq: landscape}
    L_\omega(\bm\theta_{t+1})-L_\omega(\bm\theta_{t})=-\eta_t \g_\omega^\top\g_\text{UN}(c_t)
+ (\eta_t^2/2)\g_\text{UN}^\top \H_\omega \g_\text{UN}+o(\eta_t^2),
\end{equation}
where $\omega$ is either \R ~or \F. 

\if 0
\begin{align}
\begin{split}
L_\R(\bm\theta_{t+1})-L_\R(\bm\theta_{t})&= -\eta_t \g_\R^\top\g_\text{UN}(c_t)
+\frac{\eta_t^2}{2}\g_\text{UN}^\top \H_\R \g_\text{UN}+o(\eta_t^2),
\\
L_\F(\bm\theta_{t+1})-L_\F(\bm\theta_{t})&= -\eta_t \g_\F^\top\g_\text{UN}(c_t)
+\frac{\eta_t^2}{2}\g_\text{UN}^\top \H_\F \g_\text{UN}+o(\eta_t^2).
\end{split}
\label{eq: landscape}
\end{align}
\fi 

Here $\H_\omega=\frac{\partial^2 L_\omega}{\partial \bm\theta^2}$ is the Hessian matrix, which \emph{empirically} gives $\g_\text{UN}^\top \H_\omega \g_\text{UN}>0$ and renders $L_\R$ and $L_\F$ locally and directionally convex along the gradients. This allows the existence of a minimizing learning rate to be characterized in \Cref{sec:auto lr}.
We visualize the loss landscape of Phi-1.5~\cite{li2023textbooks} model on an unlearning benchmark, TOFU dataset~\cite{maini2024tofu} in \Cref{fig:parabola} and observe that the quadratic functions in Eq. \eqref{eq: landscape} are well-fitted in most iterations.

\begin{figure}[t]
   \centering
  \includegraphics[width=0.4\linewidth]{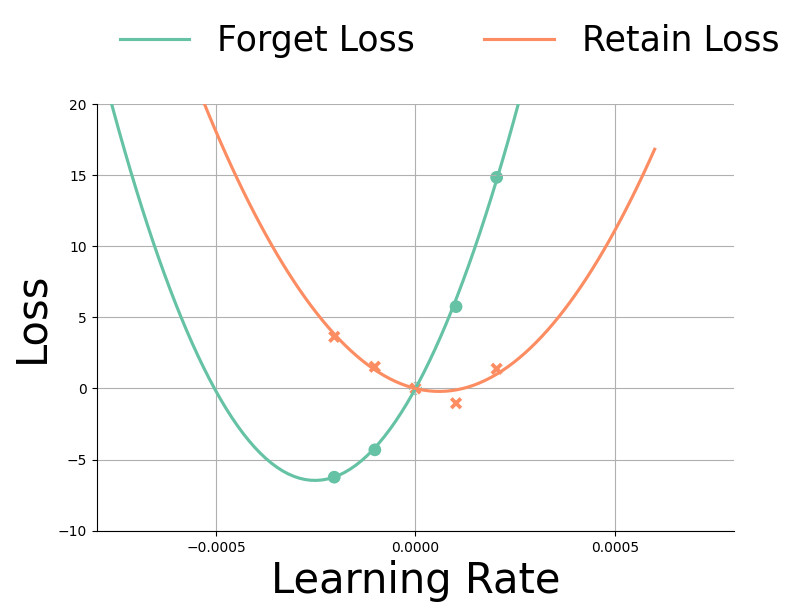}
    \caption{Loss values of retaining and forgetting sets with different learning rates. Markers are $L_\R(\bm\theta_t-\eta\g_\R)$ and $L_\F(\bm\theta_t-\eta\g_\F)$ estimated by Phi-1.5 on \textit{TOFU} at step 10. The curves are fitted as quadratic functions.}
    \label{fig:parabola}
\end{figure}
\if0
\begin{figure*}[!htb]
\RawFloats
\begin{minipage}{.45\linewidth}

\end{minipage}
\hspace{.05\linewidth}
\begin{minipage}{.44\linewidth}
\centering
  \includegraphics[width=\linewidth]{figures/grad_norm_loss_lr_parabola_0.png}
    \caption{Loss values of retaining and forgetting sets with respect to different learning rates. Markers are $L_\R(\bm\theta_t-\eta\g_\R)$ and $L_\F(\bm\theta_t-\eta\g_\F)$ using the \textit{TOFU} dataset on Phi-1.5 at step 10, on which the curves are fitted as quadratic functions.}
    \label{fig:parabola}
\end{minipage}
\end{figure*}

\fi

\subsection{Normalized gradient difference}
\label{sec:grad norm}
In order for $L_\F$ to increase as well as $L_\R$ to decrease, we want to construct $\g_\text{UN}$ such that
\begin{align}
\g_\R^\top\g_\text{UN}(c_t) \geq 0\geq\g_\F^\top\g_\text{UN}(c_t).
\label{eq:sanity check}
\end{align}

To satisfy Eq. \eqref{eq:sanity check} , we propose a normalized gradient difference method (\algo) to dynamically set
$c_t=\frac{1/\|\g_\R\|}{1/\|\g_\R\|+1/\|\g_\F\|} \Longrightarrow \g_\textup{NGDiff}(\g_\R,\g_\F):=\frac{\g_\R}{\|\g_\R\|}-\frac{\g_\F}{\|\g_\F\|}.$ In words, we normalize the retaining and forgetting gradients\footnote{We illustrate in \Cref{app:gradnorm} that NGDiff is critically different and simpler than GradNorm.\citep{chen2018gradnorm}.}.
We analyze \algo as follows. 
First, we show the condition in Eq. \eqref{eq:sanity check} is satisfied at all iterations in the following lemma (proof in \Cref{app:proofs}):
\begin{lemma}
$\g_\textup{NGDiff}(\g_\R,\g_\F)$ satisfies Eq. \eqref{eq:sanity check} for any $\g_\R\in\mathbb{R}^d$ and $\g_\F\in\mathbb{R}^d$. In words, $\g_\textup{NGDiff}$ is always positively correlated to $\g_\R$ and negatively correlated to $\g_\F$.
\label{fact:gnorm good}
\end{lemma}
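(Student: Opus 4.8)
The plan is to verify the two inner-product inequalities in Eq.~\eqref{eq:sanity check} directly by substituting the explicit form of $\g_\textup{NGDiff}$ and applying the Cauchy--Schwarz inequality. Recall that $\g_\textup{NGDiff}(\g_\R,\g_\F) = \frac{\g_\R}{\|\g_\R\|} - \frac{\g_\F}{\|\g_\F\|}$, so the statement reduces to the two claims $\g_\R^\top\g_\textup{NGDiff} \geq 0$ and $\g_\F^\top\g_\textup{NGDiff} \leq 0$.

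First I would compute $\g_\R^\top\g_\textup{NGDiff} = \g_\R^\top\left(\frac{\g_\R}{\|\g_\R\|} - \frac{\g_\F}{\|\g_\F\|}\right) = \|\g_\R\| - \frac{\g_\R^\top\g_\F}{\|\g_\F\|}$. By Cauchy--Schwarz, $\g_\R^\top\g_\F \leq \|\g_\R\|\,\|\g_\F\|$, hence $\frac{\g_\R^\top\g_\F}{\|\g_\F\|} \leq \|\g_\R\|$, which gives $\g_\R^\top\g_\textup{NGDiff} \geq 0$. Symmetrically, $\g_\F^\top\g_\textup{NGDiff} = \frac{\g_\R^\top\g_\F}{\|\g_\R\|} - \|\g_\F\|$, and again by Cauchy--Schwarz $\frac{\g_\R^\top\g_\F}{\|\g_\R\|} \leq \|\g_\F\|$, so $\g_\F^\top\g_\textup{NGDiff} \leq 0$. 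Chaining these two inequalities yields $\g_\R^\top\g_\textup{NGDiff} \geq 0 \geq \g_\F^\top\g_\textup{NGDiff}$, which is exactly Eq.~\eqref{eq:sanity check}.

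The only technical caveat worth flagging is the degenerate case where $\g_\R = \bm 0$ or $\g_\F = \bm 0$, in which the normalization is undefined; I would note that in this boundary situation one of the terms is simply dropped (equivalently $c_t$ collapses to $0$ or $1$), so the inequalities continue to hold trivially, or one may simply assume $\g_\R,\g_\F\neq\bm 0$ as is implicit when gradient normalization is applied. There is no real obstacle here — the proof is a two-line application of Cauchy--Schwarz — so the main thing to get right is just presenting both directions cleanly and stating the equality conditions (the inequalities are tight precisely when $\g_\R$ and $\g_\F$ are positively parallel).
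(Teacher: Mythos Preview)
Your proposal is correct and follows essentially the same approach as the paper's proof: expand the inner product, apply Cauchy--Schwarz to bound $\g_\R^\top\g_\F \leq \|\g_\R\|\,\|\g_\F\|$, and conclude. The paper handles only the $\g_\R^\top\g_\textup{NGDiff}\geq 0$ direction explicitly and dismisses the other with ``similarly''; your version spells out both directions and adds the degenerate-case remark, but the substance is identical.
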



In \Cref{thm:NGDiff} (proof in \Cref{app:proofs}), we leverage \Cref{fact:gnorm good} to claim that the local loss improvement under appropriate learning rate, which will be implemented adaptively in \Cref{sec:auto lr}.

\begin{theorem}
Consider $\bm\theta_{t+1}=\bm\theta_t-\eta \g_\textup{NGDiff}$. 

(1) Unless $\g_\R$ is exactly parallel to $\g_\F$, for any sufficiently small learning rate $\eta$, there exist two constants $\epsilon_{\R,1}=o(\eta), \epsilon_{\F,1}=o(\eta)$ such that
\begin{equation*}
\begin{split}
    L_\R(\bm\theta_{t+1})-L_\R(\bm\theta_{t})&< \epsilon_{\R,1}; \\
    L_\F(\bm\theta_{t+1})-L_\F(\bm\theta_{t})&> \epsilon_{\F,1}.
\end{split}
\end{equation*}

(2) If additionally $\g_\textup{NGDiff}^\top\H_\R\g_\textup{NGDiff}>0$ and $\g_\textup{NGDiff}^\top\H_\F\g_\textup{NGDiff}>0$, then for any learning rate $0<\eta<\frac{2\g_\R^\top\g_\textup{NGDiff}}{\g_\textup{NGDiff}^\top \H_\R \g_\textup{NGDiff}}$, 
there exist two constants $\epsilon_{\R,2}=o(\eta^2), \epsilon_{\F,2}=o(\eta^2)$ such that 
\begin{equation*}
\begin{split}
L_\R(\bm\theta_{t+1})-L_\R(\bm\theta_{t})&< \epsilon_{\R,2};\\ L_\F(\bm\theta_{t+1})-L_\F(\bm\theta_{t})&> \epsilon_{\F,2}.
\end{split}
\end{equation*}
\label{thm:NGDiff}
\end{theorem}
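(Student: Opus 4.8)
The plan is to expand each loss along the update direction using the Taylor expansion already recorded in Eq.~\eqref{eq: landscape} with $\g_\text{UN}=\g_\textup{NGDiff}$, and then to control the sign of the dominant term via \Cref{fact:gnorm good}. Recall that that lemma gives the two key inner products $\g_\R^\top\g_\textup{NGDiff}\geq 0$ and $\g_\F^\top\g_\textup{NGDiff}\leq 0$; moreover, a quick look at the proof of \Cref{fact:gnorm good} shows that, unless $\g_\R\parallel\g_\F$, both inequalities are \emph{strict}, with $\g_\R^\top\g_\textup{NGDiff}=\|\g_\R\| - \g_\R^\top\g_\F/\|\g_\F\|>0$ by Cauchy--Schwarz, and symmetrically $\g_\F^\top\g_\textup{NGDiff}<0$. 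This strictness is what separates case (1) from a trivial statement.

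For part (1), I would write $L_\R(\bm\theta_{t+1})-L_\R(\bm\theta_t) = -\eta\,\g_\R^\top\g_\textup{NGDiff} + O(\eta^2)$. Since $\g_\R^\top\g_\textup{NGDiff}>0$ is a fixed positive constant (at the given iterate $\bm\theta_t$), the linear term $-\eta\,\g_\R^\top\g_\textup{NGDiff}$ is strictly negative and dominates the $O(\eta^2)$ remainder for all sufficiently small $\eta$; hence the whole difference is bounded above by some quantity $\epsilon_{\R,1}$ that one can take to be $o(\eta)$ (indeed the difference itself is negative, so $\epsilon_{\R,1}=0$ would already work, but the $o(\eta)$ bookkeeping is the clean way to absorb the higher-order terms). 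The forgetting side is symmetric: $L_\F(\bm\theta_{t+1})-L_\F(\bm\theta_t) = -\eta\,\g_\F^\top\g_\textup{NGDiff} + O(\eta^2)$, and $-\eta\,\g_\F^\top\g_\textup{NGDiff}>0$, so the difference exceeds a quantity $\epsilon_{\F,1}=o(\eta)$.

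For part (2), I would keep the second-order term explicit: $L_\R(\bm\theta_{t+1})-L_\R(\bm\theta_t) = -\eta\,\g_\R^\top\g_\textup{NGDiff} + \tfrac{\eta^2}{2}\g_\textup{NGDiff}^\top\H_\R\g_\textup{NGDiff} + o(\eta^2)$. Treating the right-hand side (minus the remainder) as a quadratic in $\eta$ with positive leading coefficient $\tfrac12\g_\textup{NGDiff}^\top\H_\R\g_\textup{NGDiff}>0$, it is negative precisely when $0<\eta<\frac{2\g_\R^\top\g_\textup{NGDiff}}{\g_\textup{NGDiff}^\top\H_\R\g_\textup{NGDiff}}$; absorbing the $o(\eta^2)$ remainder into $\epsilon_{\R,2}$ gives the stated bound on the range of admissible learning rates. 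For $L_\F$, the linear coefficient $-\g_\F^\top\g_\textup{NGDiff}$ is positive and the quadratic coefficient $\tfrac12\g_\textup{NGDiff}^\top\H_\F\g_\textup{NGDiff}$ is also positive, so the quadratic $-\eta\,\g_\F^\top\g_\textup{NGDiff} + \tfrac{\eta^2}{2}\g_\textup{NGDiff}^\top\H_\F\g_\textup{NGDiff}$ is positive for \emph{all} $\eta>0$; hence, for any admissible $\eta$ in the stated range, $L_\F$ increases up to the $o(\eta^2)$ term $\epsilon_{\F,2}$.

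The only genuinely delicate point is the non-parallel hypothesis in part~(1) and the sign conditions $\g_\textup{NGDiff}^\top\H_\omega\g_\textup{NGDiff}>0$ in part~(2): the former is exactly what upgrades the weak inequalities of \Cref{fact:gnorm good} to strict ones so that the linear term does not vanish, and the latter is assumed outright (and justified empirically in the surrounding text), so there is no real obstacle — the argument is essentially a careful sign analysis of a scalar quadratic in $\eta$, with all the vector geometry already packaged into \Cref{fact:gnorm good}. The one bit of care is that the constants $\epsilon_{\omega,\cdot}$ and the remainder terms depend on $\bm\theta_t$ (through $\H_\omega$ and higher derivatives along the segment), so the statement should be read as holding at a fixed iterate, which matches how it is used later in \Cref{sec:auto lr}.
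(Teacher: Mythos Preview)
Your proposal is correct and matches the paper's proof essentially line for line: both expand via Eq.~\eqref{eq: landscape}, invoke \Cref{fact:gnorm good} (with strictness under the non-parallel hypothesis) to control the first-order term in part~(1), and then analyze the sign of the scalar quadratic $-\eta\,\g_\omega^\top\g_\textup{NGDiff}+\tfrac{\eta^2}{2}\g_\textup{NGDiff}^\top\H_\omega\g_\textup{NGDiff}$ for part~(2). Your write-up is in fact a bit more explicit than the paper's about why the Cauchy--Schwarz equality case corresponds to $\g_\R\parallel\g_\F$ and about the iterate-dependence of the constants, but the argument is the same.
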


To interpret \Cref{thm:NGDiff}, we view $\epsilon\approx 0$ as $\eta$ is generally small (say $\eta\sim 10^{-4}$ in our experiments), and hence, NGDiff is optimizing on $\R$ and $\F$ simultaneously. Visually speaking, \Cref{fact:gnorm good} constrains NGDiff's gradient to stay in the green area in \Cref{fig:span} unless $\g_\F\parallel\g_\R$, whereas other methods do not explicitly enforce Eq. \eqref{eq:sanity check} and may consequently harm the retaining utility.

We end the analysis with the following remark: 

\begin{remark}
The condition, $\g_\textup{NGDiff}^\top\H\g_\textup{NGDiff}>0$ in part (2) of \Cref{thm:NGDiff}, may not always hold in deep learning. However, it empirically holds in most iterations across models and datasets in our experiments (\textit{cf.}, our \Cref{fig:parabola} and Figure 2 in \cite{bu2024automatic}), and we can stablize the training by not updating $\eta$ when the condition fails.
\end{remark}

\subsection{Automatic learning rate adaption}
\label{sec:auto lr}

In order for NGDiff to work as in \Cref{thm:NGDiff}, the learning rate schedule needs to be carefully selected so that $0<\eta_t<\frac{2\g_\R^\top\g_\textup{NGDiff}}{\g_\textup{NGDiff}^\top \H_\R \g_\textup{NGDiff}}$ at each iteration. In \Cref{alg:auto unlearning}, we adapt GeN~\citep{bu2024automatic} (or AutoLR)  to the unlearning setting and dynamically set the learning rates\footnote{We note other parameter-free methods such as D-adaptation, Prodigy, and DoG can also set the learning rate automatically. However, these methods need to be tailored for different gradient methods, hence not compatible to NGDiff or the unlearning algorithms in general. We give a detailed explanation in \Cref{app:related}.} as the minimizer of \eqref{eq: landscape}: to locally optimize $L_\R$ and to monotonically increase $L_\F$, we use the following learning rate:
\begin{align}
\eta_t^*={\g_\R^\top \g_\textup{NGDiff}}/{\g_\textup{NGDiff}^\top \H_\R \g_\textup{NGDiff}}.
\label{eq:optim lr}
\end{align} 

 GeN estimates two scalars -- the numerator and denominator of Eq. \eqref{eq:optim lr} by analyzing the difference of loss values, thus the high-dimensional Hessian matrix $\H_\R$ is never instantiated.
 We devote \Cref{app:GeN} to explain how GeN works and how we have modified GeN for unlearning, such as only forward passing on $\R$ but not $\F$ in Eq. \eqref{eq:optim lr}. 


\begin{remark}
\label{rem:GeN lazy}
There is a computational overhead to use GeN, as it requires additional forward passes to estimate $\eta_t^*$. Nevertheless, we only update the learning rate every 10 iterations so that the overhead is amortized and thus negligible.
\end{remark}

\Cref{alg:auto unlearning} summarizes \algo. We note that $\g_\F,\g_\R$ can be stochastic gradients.

\begin{algorithm}[!htb]

\caption{Learning-rate-free NGDiff}
\begin{algorithmic}[1]
\For{$t=1,2,...$}
\State \textcolor{gray}{----NGDiff----}
\State Compute  $L_\R(\bm\theta_t)$ by a forward pass on $\R$
\State Compute $\g_\R(\bm\theta_t)$ by backward propagation
\State Compute  $L_\F(\bm\theta_t)$ by a forward pass on $\F$
\State Compute $\g_\F(\bm\theta_t)$ by backward propagation
\State Construct $\g_\textup{NGDiff}=\g_\R/\|\g_\R\|-\g_\F/\|\g_\F\|$
\State \textcolor{gray}{----AutoLR----}
\If{$t$ mod $10==0$:}
\State Compute $ L_{R}^\pm= L_\R(\bm\theta_t \pm\eta\g)$ by two forward passes on $\R$
\State Fit the quadratic function $\mathcal{Q}$ in Eq. \eqref{eq: landscape} from $(-\eta,0,\eta)\to ( L_\R^-,  L_\R,  L_\R^+)$
\State Derive the optimal learning rate $\eta_t^*:=\text{argmin}_x\mathcal{Q}(x)$ and set $\eta=\eta_t^*$
\EndIf
\State Update $\theta_{t+1}=\theta_t-\eta \g_\textup{NGDiff}$
\EndFor
\end{algorithmic}
\label{alg:auto unlearning}
\end{algorithm}

\section{Experiments}
\label{sec:exp}
\subsection{Setup}
\label{sec:dataset}
\noindent\textbf{Dataset} We evaluate the empirical performance of our proposed method on the two following datasets (see more dataset details in Section \ref{sec:app_dataset}):

\begin{table*}[t]
    \centering
    \small
    \vspace{-0.2cm}    
    \caption{
    Performance on \textit{TOFU} dataset (\textit{forget10/retain90)} with different unlearning methods and models. 
  We define success as the model being able to reduce \textit{Verbatim memorization} to below 0.1 or maintain at least 70\% of the \textit{Model utility} and the \textit{TruthRatio}  compared to the \textit{No-unlearn}, with successful cases highlighted in bold. \algo achieves success in most cases. }  
    
    \setlength{\tabcolsep}{4pt}
    \begin{tabular}{c|c|c|cccccc}
        \toprule
        \multirow{2}{*}{\textbf{Base Model}} 
 &\multirow{2}{*}{\textbf{Metric}} & \multicolumn{7}{c}{\textbf{Method}}\\
	& & \textit{No-unlearn} & \textit{GDiff-0.9} & \textit{GDiff-0.5} & \textit{GDiff-0.1} & \textit{NPO} & \textit{LossNorm} & \textit{NGDiff} \\
        \midrule
        \multirow{3}{*}{Phi-1.5} & Verbmem $\downarrow$ & $1.000$ & $0.805$ & $\textbf{0.027}$ & $\textbf{0.000}$ & $\textbf{0.000}$ & $0.432$ & $\textbf{0.024}$ \\
        & Utility $\uparrow$ & $1.000$ & $0.992$ & $0.308$ & $0.000$ & $0.000$ & $\textbf{0.752}$ & $\textbf{0.747}$ \\
        & TruthRatio $\uparrow$ & $0.385$ & $0.205$ & $0.216$ & $0.221$ & $0.179$ & $0.214$ & $\textbf{0.353}$ \\
         \midrule
         \multirow{3}{*}{Falcon-1B} & Verbmem $\downarrow$ & $1.000$ & $\textbf{0.041}$ & $\textbf{0.001}$ & $\textbf{0.000}$ & $\textbf{0.017}$ & $\textbf{0.055}$ & $\textbf{0.021}$ \\
        & Utility $\uparrow$ & $1.000$ & $\textbf{0.434}$ & $0.305$ & $0.000$ & $0.114$ & $0.521$ & ${0.428}$ \\
        & TruthRatio $\uparrow$ & $0.408$ & $0.237$ & $0.244$ & $0.217$ & $0.184$ & $0.252$ & $\textbf{0.354}$ \\
         \midrule
         \multirow{3}{*}{GPT2-XL} & Verbmem $\downarrow$ & $1.000$ & $\textbf{0.029}$ & $\textbf{0.001}$ & $\textbf{0.000}$ & $\textbf{0.031}$ & $\textbf{0.022}$ & $\textbf{0.046}$ \\
        & Utility $\uparrow$ & $0.999$ & $0.381$ & $0.250$ & $0.000$ & $0.136$ & $0.376$ & $\textbf{0.792}$ \\
        & TruthRatio $\uparrow$ & $0.412$ & $0.186$ & $0.278$ & $0.133$ & $0.179$ & $0.196$ & $\textbf{0.399}$ \\
         \midrule
         \multirow{3}{*}{Llama2-7B} & Verbmem $\downarrow$ & $1.000$ & $0.810$ & $\textbf{0.011}$ & $\textbf{0.000}$ & $0.709$ & $\textbf{0.010}$ & $\textbf{0.002}$ \\
        & Utility $\uparrow$ & $1.000$ & $\textbf{0.851}$ & $0.324$ & $0.000$ & $0.682$ & $0.264$ & $\textbf{0.724}$ \\
        & TruthRatio $\uparrow$ & $0.490$ & ${0.340}$ & $\textbf{0.364}$ & $0.161$ & $0.329$ & $0.329$ & ${0.334}$ \\
         \midrule
         \multirow{3}{*}{Mistral-7B} & Verbmem $\downarrow$ & $1.000$ & $1.000$ & $0.945$ & $0.410$ & $0.385$ & $0.259$ & $\textbf{0.009}$ \\
        & Utility $\uparrow$ & $1.000$ & $\textbf{0.999}$ & $\textbf{0.944}$ & $0.517$ & $0.341$ & $\textbf{0.925}$ & $\textbf{0.996}$ \\
        & TruthRatio $\uparrow$ & $0.344$ & $\textbf{0.345}$ & $\textbf{0.366}$ & $\textbf{0.374}$ & $\textbf{0.364}$ & $\textbf{0.358}$ & $\textbf{0.379}$ \\
        \bottomrule
    \end{tabular}
\label{table:results_unlearning_tofu}
\end{table*}

\textit{Task of Fictitious Unlearning (TOFU)} \citep{maini2024tofu}. \textit{TOFU} consists of 20 question-answer pairs based on fictitious author biographies generated by GPT-4 \citep{achiam2023gpt}.
    In our experiments, we use the \textit{forget10} (10\% of the full training set) as the forgetting set and \textit{retain90} (90\% of the full training set) as the retaining set.

 \textit{MUSE-NEWS} \citep{shi2024muse}. This dataset consists of BBC news articles \citep{li2023avoiding} published since August 2023. We use its \textit{train} split to finetune a target model, and then the \textit{raw} set, which includes both the forgetting and retaining data, for the target model unlearning. Finally, the \textit{verbmem} and \textit{knowmem} splits are used to evaluate the unlearned model's performance.


\noindent\textbf{Unlearning methods} We compare \algo with 4 baselines. The first baseline method is the target model without any unlearning, while the remaining three are the state-of-the-art unlearning methods.

\textit{No-unlearn}. We fine-tune the base model on the full training data. Subsequent unlearning approaches are then applied on  \textit{No-unlearn}.

\textit{Gradient Difference (GDiff)} \citep{liu2022continual}. 
GDiff  (see Sec. \ref{section:pareto}) applies
static linear scalarization with $c=0.5$ in MTO. For a thorough comparison, we also include the extended GDiff method, with $c=0.1$ or $c=0.9$.




\textit{Loss Normalization (LossNorm)}. As discussed in Section \ref{section:pareto}, this approach computes and normalizes the forget loss and retain loss separately, with the overall loss being $L_\R/|L_\R|-L_\F/|L_\F|$.



\textit{Negative Preference Optimization (NPO)} \citep{zhang2024negative}. 
NPO uses preference optimization \citep{ouyang2022traininglanguagemodelsfollow} with the loss: $L_{\text{NPO}, \beta}(\bm\theta) =$
\begin{equation}
 -\frac{2}{\beta}\E_{\F} \Big[\log \sigma \Big(-\beta \log \frac{f(S, w)}{f_\text{No-unlearn}(S, w)}\Big) \Big],
\end{equation}
where $S$ is randomly sampled from F, $\beta > 0$ is the inverse temperature, $f$ is the unlearned model, and $f_\text{No-unlearn}$ is the model before unlearning.

\paragraph{Foundation Models}
We test multiple LLMs: \textsc{LLaMA2-7B} \citep{touvron2023llama}, Phi-1.5 \citep{li2023textbooks}, Falcon-1B \citep{penedo2023refinedwebdatasetfalconllm}, GPT2-XL \citep{radford2019language} and Mistral-7B \citep{jiang2023mistral}. They are pre-trained and then fine-tuned on datasets in Section \ref{sec:dataset}, with AdamW optimizer and are carefully tuned (Appendix \ref{sec:app_experiment}).

\begin{table*}[!htb]
\vspace{-0.2cm}       \centering
    \small
    \setlength{\tabcolsep}{4pt}
    \begin{tabular}{c|c|c|cccccc}
        \toprule
        \multirow{2}{*}{\textbf{Base Model}} 
 &\multirow{2}{*}{\textbf{Metric}} & \multicolumn{7}{c}{\textbf{Method}}\\
	& & \textit{No-unlearn} & \textit{GDiff-0.9} & \textit{GDiff-0.5} & \textit{GDiff-0.1} & \textit{NPO} & \textit{LossNorm} & \textit{NGDiff} \\
        \midrule
         \multirow{3}{*}{Llama2-7B} & Verbmem $\downarrow$ & $0.561$ & $0.555$ & $\textbf{0.043}$ & $\textbf{0.004}$ & $\textbf{0.000}$ & $0.388$ & $\textbf{0.036}$ \\
           & Knowmem $\downarrow$ & $0.755$ & $0.717$ & $\textbf{0.287}$ & $\textbf{0.000}$ & $\textbf{0.000}$ & $\textbf{0.514}$ & $\textbf{0.455}$ \\
        & Utility $\uparrow$ & $0.646$ & $\textbf{0.641}$ & $0.275$ & $0.000$ & $0.000$ & $\textbf{0.506}$ & $\textbf{0.556}$ \\
         \midrule
         \multirow{3}{*}{Mistral-7B} & Verbmem $\downarrow$ & $0.578$ & $0.177$ & $\textbf{0.000}$ & $\textbf{0.000}$ & $0.113$ & $0.196$ & $\textbf{0.098}$ \\
           & Knowmem $\downarrow$ & $0.416$ & $\textbf{0.257}$ & $\textbf{0.000}$ & $\textbf{0.000}$ & $0.343$ & $0.293$ & $\textbf{0.165}$ \\
        & Utility $\uparrow$ & $0.411$ & $\textbf{0.339}$ & $0.000$ & $0.000$ & $\textbf{0.316}$ & $\textbf{0.343}$ & $\textbf{0.354}$ \\
      
        \bottomrule
    \end{tabular}
\vspace{-0.2cm}    \caption{
Results on the \textit{MUSE-NEWS} dataset. 
We boldface the entries where unlearning successfully reduces \textit{Verbatim memorization} to below 0.1, reduces \textit{Knowledge memorization} to less than 70\% of \textit{No-Unlearn}, or maintains at least 70\% of the Utility compared to \textit{No-Unlearn}. With the exception of \algo, most unlearning approaches exhibit a significant trade-off between forgetting and utility.}

\label{table:results_unlearning_muse}
\end{table*}

\subsection{Evaluation Metrics}

Following the existing work \citep{shi2024muse}, we evaluate the unlearning performance based on model's output quality. We expect a good performance should satisfy the following requirements:

\noindent\textbf{No verbatim memorization}. After the unlearning, the model should no longer remember any verbatim copies of the texts in the forgetting data. To evaluate this, we prompt the model with the first $k$ tokens in F and compare the model's continuation outputs with the ground truth continuations. We use ROUGE-L recall scores for this comparison, where a lower score is better for unlearning.

\noindent\textbf{No knowledge memorization}. After the unlearning, the model should not only forget verbatim texts, but also the knowledge in the forgetting set. For the \textit{MUSE-NEWS} dataset, we evaluate knowledge memorization using the \textit{Knowmem$_\F$} split, which consists of generated question-answer pairs based on the forgetting data. Similar to verbatim memorization, we use ROUGE-L recall scores. 

\noindent\textbf{Maintained model utility}. An effective unlearning method must maintain the model's performance on the retaining set. We prompt the model with the question from R and compare the generated answer to the ground truth. We use ROUGE-L recall scores for these comparisons. Additionally, we evaluate the model using the Truth Ratio metric. We use the \textit{Retain10-perturbed} split from \textit{TOFU}, which consists of five perturbed answers created by modifying the facts in each original answer from R. The Truth Ratio metric computes how likely the model generates a correct answer versus an incorrect one, where a higher value is better.

\begin{table}[t]
    \centering
    \small
    \setlength{\tabcolsep}{6pt}
    \begin{tabular}{@{}c|ccc@{}}
        \toprule
        \multirow{2}{*}{\textbf{Method}} & \multicolumn{3}{c}{\textbf{TOFU} (without $\to$ with AutoLR)} \\
	& \textit{Verbmem} $\downarrow$  & \textit{Utility} $\uparrow$ & \textit{TruthRatio} $\uparrow$ \\
        \midrule
        \textbf{No-unlearn} & $1.00$ & $1.00$ & $0.39$ \\
        \textbf{GDiff c=0.9} & $0.81 \to 0.20$ & $0.99 \to 0.42$ & $0.21 \to 0.31$ \\
        \textbf{GDiff c=0.5} & $0.03 \to 0.00$ & $0.31 \to 0.03$ & $0.22 \to 0.30$ \\
        \textbf{GDiff c=0.1} & $0.00 \to 0.00$ & $0.00 \to 0.00$ & $0.22 \to 0.23$ \\
        \textbf{NPO} &  $0.00 \to 0.00$ & $0.00 \to 0.00$ & $0.18\to 0.22$ \\
        \textbf{LossNorm} &  $0.43 \to 0.23$ & $0.75 \to 0.73$ & $0.21 \to 0.34$ \\
        \textbf{NGDiff} & $0.02 \to 0.01$ & $0.61 \to 0.75$  & $0.29 \to 0.35$ \\
        \bottomrule
    \end{tabular}
    \caption{
    Influence of AutoLR with 
    different unlearning methods on the Phi-1.5 model. AutoLR improves the \textit{TruthRatio} and reduces \textit{Verbmem} across all methods. W/ or w/o AutoLR, \algo  outperforms other baselines.}
    \label{table:ablation_ngdiff}
\end{table}

\subsection{Main Results}

The results for \textit{Verbatim memorization (Verbmem)}, \textit{Model utility (Utility)}, \textit{TruthRatio}, and \textit{Knowledge memorization (Knowmem)} using different unlearning methods are presented in Table \ref{table:results_unlearning_tofu}, \ref{table:results_unlearning_muse} as well as  \ref{table:results_unlearning_muse_appendix} in Appendix. We evaluate these metrics using \textit{TOFU} and \textit{MUSE-NEWS} across LLMs.

In summary, our NGDiff consistently achieves the superior performance across all models on both datasets. In stark contrast, the baseline unlearning methods (1) either effectively forget $\R$ by reducing \textit{Verbmem} and \textit{Knowmem} but fail maintain the \textit{Utility} and \textit{TruthRatio}, such as GDiff with $c\leq 0.5$, NPO; (2) or cannot unlearn $\F$ on Phi-1.5 and Mistral-7B, such as LossNorm and GDiff with $c=0.9$. We highlight that the effectiveness of these unlearning methods are highly model-dependent and dataset-dependent, unlike NGDiff.

\begin{figure*}[!htb]
\centering
\includegraphics[width=0.99\linewidth]{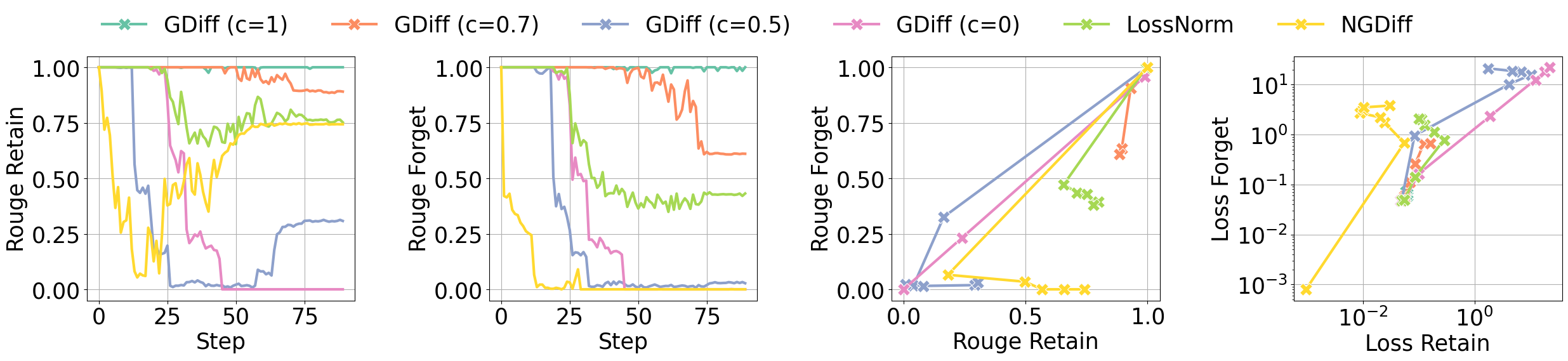}
\vspace{-0.3cm}
\caption{Comparison of unlearning methods on TOFU. The figures show the ROUGE scores and loss terms during unlearning process with different methods, which includes GDiff, LossNorm, and NGDiff. We observe that NGDiff effectively unlearns the forgetting data while maintaining the performance on the retaining data.}
\label{fig:all_method_rouge_loss_steps}
\end{figure*}

\begin{figure*}[!htb]
\vspace{-0.4cm}
\centering
\includegraphics[width=.99\linewidth]{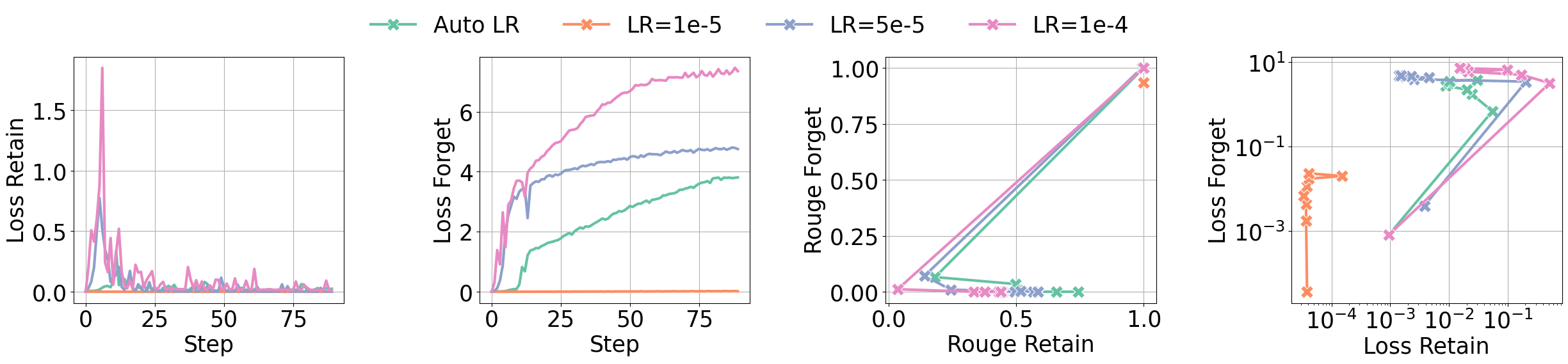}
\vspace{-0.3cm}
\caption{Comparison between AutoLR and different learning rates on NGDiff. The figures show the ROUGE scores and loss values during the unlearning process on TOFU dataset using Phi-1.5 model. We observe that AutoLR outperforms the static learning rates with better model utility and more stable convergence.}
\label{fig:ablation_autolr}
\end{figure*}

For the \textit{TOFU} dataset, we observe that some unlearning methods fail to unlearn the forget data effectively. For example, GDiff-0.9 and LossNorm do not unlearn effectively when applied to Phi-1.5, Llama2-7B and Mistral-7B. In fact, GDiff-0.9 has $80\%\sim100\%$ \textit{Verbmem} and LossNorm has $>40\%$ \textit{Verbmem} on Phi-1.5. However, they are effective on Falcon-1B and GPT2-XL, even though these models have similar sizes ($\approx 1$B parameters) to Phi-1.5. On the other hand, some methods fail to preserve the model utility after unlearning. For example, GDiff-0.1 has close to 0 \textit{Utility} on Phi-1.5, Falcon-1B, GPT2-XL and Llama2-7B; similarly, NPO also experiences a significant drop in \textit{Utility} on Phi-1.5 model, Falcon-1B and GPT2-XL, but not so on Llama2-7B. In contrast, our NGDiff remains effective in unlearning $\F$ and maintaining $\R$ across the models. In addition, NGDiff achieves the best \textit{TruthRatio} on all models except Llama2-7B (which is still on par with the best), indicating that the model's answers remain factually accurate for questions in the retaining data.

For the \textit{MUSE-NEWS} dataset, NGDiff also outperforms the baseline methods on Llama2-7B and Mistral-7B models by achieving a lower \textit{Verbmem} and a higher \textit{Utility}. The \textit{Knowmem} results indicate that NGDiff not only unlearns the verbatim copies of the forgetting texts, but also successfully removes the associated knowledge. While the model capacities of Phi-1.5 and Falcon-1B are smaller, limiting their ability to learn knowledge effectively after fine-tuning on the full dataset, as shown in \Cref{table:results_unlearning_muse_appendix},
\algo still performs well. 


To further illustrate the performance of our proposed method during the training, in addition to the last iterate results, we plot the ROUGE scores and loss terms during the unlearning process in  \Cref{fig:all_method_rouge_loss_steps}. We apply the extended GDiff, LossNorm, and NGDiff methods, to the Phi-1.5 model using the \textit{TOFU} dataset. While GDiff with $c=0.5$ and $c=0.7$, and NGDiff are effective in unlearning, only NGDiff preserve the model utility above $75\%$ ROUGE score. A closer look at the second and the fourth plots of \Cref{fig:all_method_rouge_loss_steps} shows that NGDiff exhibits the fastest and most stable convergence on $\F$ while maintaining a low retaining loss $\leq 0.1$. 

\subsection{Ablation Study}
\textbf{Effectiveness of NGDiff}.
In our experiments, we utilize the automatic learning rate scheduler (AutoLR) for NGDiff method. To investigate the impact of NGDiff alone, we compare all methods with or without AutoLR in \Cref{table:ablation_ngdiff}. With AutoLR or not (where we use manually tuned learning rates), NGDiff, GDiff ($c=0.1$ or $0.5$) and NPO can effectively unlearn in terms of \textit{Verbmem}. However, among these four methods, NGDiff uniquely retains a reasonable \textit{Utility} between $60\sim 75\%$, while other methods retains only $0\sim 30\%$ \textit{Utility}. A similar pattern is observed in terms of \textit{TruthRatio} as well. Overall, NGDiff significantly outperforms other baseline methods with or without AutoLR.

\if0

\begin{table}[t]
    \centering
    \small
    \setlength{\tabcolsep}{6pt}
    \begin{tabular}{c|ccc}
        \toprule
        \multirow{2}{*}{\textbf{Method}} & \multicolumn{3}{c}{\textbf{TOFU} (wo $\to$ w/ AutoLR)} \\
	& \textit{Verbmem} $\downarrow$  & \textit{Utility} $\uparrow$ & \textit{TruthRatio} $\uparrow$ \\
        \midrule
        \textbf{No-unlearn} & $1.000$ & $1.000$ & $0.385$ \\
        \textbf{GDiff c=0.9} & $0.805 \to 0.200$ & $0.992 \to 0.422$ & $0.205 \to 0.308$ \\
        \textbf{GDiff c=0.5} & $0.027 \to 0.001$ & $0.308 \to 0.032$ & $0.216 \to 0.297$ \\
        \textbf{GDiff c=0.1} & $0.000 \to 0.000$ & $0.000 \to 0.000$ & $0.221 \to 0.229$ \\
        \textbf{NPO} &  $0.000 \to 0.000$ & $0.000 \to 0.000$ & $0.179 \to 0.223$ \\
        \textbf{LossNorm} &  $0.432 \to 0.231$ & $0.752 \to 0.725$ & $0.214 \to 0.336$ \\
        \textbf{NGDiff} & $0.024 \to 0.012$ & $0.607 \to 0.747$  & $0.289 \to 0.353$ \\
        \bottomrule
    \end{tabular}
    \caption{Results of \textit{Verbmem}, \textit{Utility}, and \textit{TruthRatio} using different unlearning methods with AutoLR on the Phi-1.5 model. AutoLR improves the \textit{TruthRatio} and reduces \textit{Verbmem} across all methods. With or without AutoLR, NGDiff can significantly outperform other baseline methods.}
    \label{table:ablation_ngdiff}
\end{table}
\fi

\noindent\textbf{Impact of automatic learning rate}.
To evaluate the impact of AutoLR scheduler, we see in \Cref{table:ablation_ngdiff} all methods exhibit an increase in the \textit{TruthRatio} metric and a decrease in \textit{Verbmem}, though with some loss in the \textit{Utility}. For instance, \textit{LossNorm} benefits significantly from AutoLR with $\approx 20\%$ decrease in \textit{Verbmem}, and NGDiff increases its retaining \textit{Utility} and \textit{TruthRatio} by $>22\%$. We specifically demonstrate the impact of AutoLR on NGDiff in \Cref{fig:ablation_autolr}. Without AutoLR, the model's performance is highly sensitive to the static learning rates: when $\eta=10^{-5}$, the model fails to unlearn $\F$ as indicated by the low loss and high ROUGE score; in contrast, when $\eta=10^{-4}$, there is a significant drop in ROUGE score on the retain data, falling from 100\% to around 50\%. However, with the AutoLR scheduler, we observe a steady reduction in the \textit{Verbmem} (with the ROUGE forget close to 0 at convergence) while maintaining high utility (the ROUGE retain is 0.747, which is 19.5\% higher than the best results without AutoLR). 

\section{Conclusion and Discussion}
We formulated the machine unlearning problem as a two-task optimization problem and proposed a novel unlearning method \algo based on normalized gradient difference and automatic learning rate adaption. By leveraging insights from multi-task optimization, \algo empirically improves forgetting quality while maintaining utility. We hope this paper helps establish a connection between LLM unlearning and multi-task optimization, and inspires further advancements in this field.

\section*{Limitations}

Like other machine learning approaches in NLP, while our goal is to remove the influence of specific documents from LLMs, complete removal cannot always be guaranteed. Therefore, caution should be exercised when applying the proposed unlearning techniques in practical applications as unlearned LLMs can still potentially generate harmful or undesired outputs. 

There are several technical alternatives that we did not explore in this paper due to its scope and limited resources. For example, other learning-rate-free methods could potentially be adapted as alternatives to the GeN approach used in this work. Additionally, other multi-task optimization methods could be applied to machine unlearning. However, scaling these approaches to the level of LLMs could be challenging, and are left as future work.

Finally, we mainly examined \algo's effectiveness on LLM unlearning in this paper with two benchmark datasets, TOFU and MUSE.
To show its generalizability, we provide an additional example to apply the algorithm to computer vision tasks (see Appendix \ref{sec:app_cv_exp}). 
However, it would be desirable to test NGDiff on other modalities beyond NLP and CV applications. 


\subsubsection*{Acknowledgments}
We would like to thank Shankar Ananthakrishnan, Fabian Triefenbach and Jianhua Lu from Amazon AGI Foundations team for providing feedback on this paper and supporting this work.


\bibliography{ref}
\bibliographystyle{plain}


\clearpage
\appendix

\section{Preliminary evidence}\label{sec:preliminary_evidence}
In our preliminary experiments, we observe two key issues preventing the standard methods from being practically applied. First, balancing retaining and forgetting losses is difficult. In \Cref{fig:intro-example}, we observe a trade-off between the performance on $\R$ and $\F$, where some methods fail to unlearn $\F$ (points in the upper-right corner of the left figure), and some do not maintain utility in $\R$ (points in the bottom-left corner of the left figure). The  {\color{blue}blue dotted} line in \Cref{fig:intro-example} further illustrates the trade-off in GDiff by sweeping a hyper-parameter $c\in [0,1],$ which is used to balance the losses on the forgetting and retaining data (see Eq. \eqref{eq:static}). Picking an appropriate $c$ to balance the two terms is often challenging. Secondly, the optimization methods for unlearning are usually sensitive to the learning rate. As illustrated in \Cref{fig:grad_diff_lr_rouge}, even for the same algorithm, various learning rates lead to substantial changes in the ROUGE scores and loss values, making the unlearning methods unstable and difficult to use in practice.

\begin{figure*}[!h]
\centering
\includegraphics[width=0.99\linewidth]{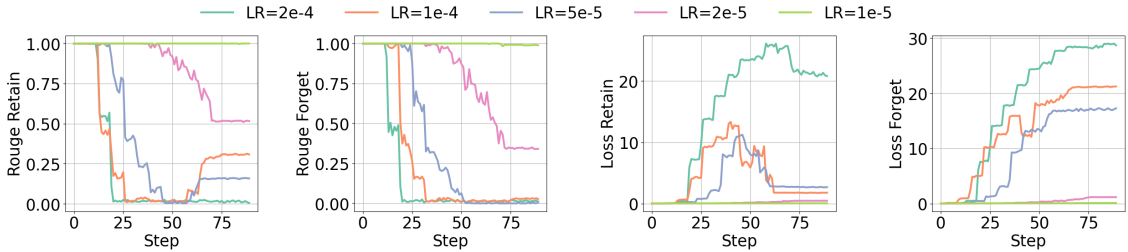}
\caption{ROUGE scores and loss values during unlearning with vanilla GDiff (equally weighted), under different learning rates to which the unlearning performance is highly sensitive.
}
\label{fig:grad_diff_lr_rouge}
\end{figure*}

\section{Comparing NGDiff with GradNorm}
\label{app:gradnorm}
\begin{algorithm}[!htb]
\small
\caption{NGDiff}
\begin{algorithmic}[1]
\For{$t=1,2,...$}
\State Compute retaining loss $L_\R(\bm\theta_t)$ by one forward pass
\State Compute retaining gradient $\g_\R(\bm\theta_t)=\nabla_{\bm\theta}{L_\R}$
\State Compute forgetting loss $L_\F(\bm\theta_t)$ by one forward pass
\State Compute forgetting gradient $\g_\F(\bm\theta_t)=\nabla_{\bm\theta}{L_\F}$
\State Construct unlearning gradient $\g_\textup{NGDiff}=\g_\R/\|\g_\R\|-\g_\F/\|\g_\F\|$
\State Update $\bm\theta_{t+1}=\bm\theta_t-\eta \g_\textup{NGDiff}$
\EndFor
\end{algorithmic}
\label{alg:appendix_ngdiff}
\end{algorithm}

\begin{algorithm}[!htb]
\small
\caption{GradNorm for two-task}
\begin{algorithmic}[1]
\State Initialize the scalaring coefficients $w_\R(\bm\theta_0) = 1$ and $w_\F(\bm\theta_0) = 1$
\State Pick value for $\alpha > 0$ and pick the weights $\bm\theta_\text{LS}$ (the last shared layer of $\bm\theta_t$)

\For{$t=1,2,...$}
\State {Compute retaining loss $L_\R(\bm\theta_t)$ by one forward pass}
\State \textcolor{red}{Compute retaining gradient $\g_\R(\bm\theta_\text{LS})=\nabla_{\bm\theta_\text{LS}}{L_\R}$}
\State {Compute forgetting loss $L_\F(\bm\theta_t)$ by one forward pass}
\State \textcolor{red}{Compute forgetting gradient $\g_\F(\bm\theta_\text{LS})=\nabla_{\bm\theta_\text{LS}}{L_\F}$ }
\State Compute loss $L(\bm\theta_t) = w_\R(\bm\theta_t)L_\R(\bm\theta_t) + w_\F(\bm\theta_t)L_\F(\bm\theta_t)$
\State {\color{red}Compute $\Bar{\g}(\bm\theta_\text{LS})$ by averaging $\g_\R$ and $\g_\F$}
\State {\color{red}Compute GradNorm loss
$$L_{GN}(\bm\theta_t) = |\g_\R - \Bar{\g} \times [r_\R(t)]^\alpha|_1 + |\g_\F - \Bar{\g} \times [r_\F(t)]^\alpha|_1$$}
\State {\color{red}Compute GradNorm gradients $\nabla_{w_\R} L_{GN}$ and $\nabla_{w_\F} L_{GN}\in\R$}
\State Compute the full gradient $\nabla_{\bm\theta_t} L$
\State {\color{red}Update $w_\R(\bm\theta_{t}) \to w_\R(\bm\theta_{t+1})$ and $w_\F(\bm\theta_{t}) \to w_\F(\bm\theta_{t+1})$ using $\nabla_{w_\R} L_{GN}$ and $\nabla_{w_\F} L_{GN}$}
\State Update $\bm\theta_{t+1}=\bm\theta_t-\eta \nabla_{\bm\theta_t} L$
\State Renormalize $w_\R(\bm\theta_{t+1})$ and $w_\F(\bm\theta_{t+1})$ so that $w_\R(\bm\theta_{t+1}) +  w_\F(\bm\theta_{t+1}) = 2$
\EndFor
\end{algorithmic}
\label{alg:appendix_gradnorm}
\end{algorithm}

We compare the GradNorm algorithm \cite{chen2018gradnorm} with our proposed method, NGDiff. We highlight some steps of GradNorm in \textcolor{red}{red} to indicate the differences than NGDiff:

\begin{itemize}
\item NGDiff sets the scalaring coefficient as $1/\|\g_\R\|$ and $1/\|\g_\F\|$, while GradNorm uses gradient descent to learn these coefficients as $w_\R$ and $w_\F$.
    \item NGDiff is model-agnostic while GradNorm contains specific designs for multi-task architecture. In unlearning, there are \textbf{2 data splits} (i.e., $\F$ and $\R$) and each data split defines one task. Hence all model parameters are shared. However, in the original form of GradNorm, there is \textbf{1 data split} on which multiple tasks are defined (can be more than 2). Hence the model parameters are partitioned into [shared layers, task 1 specific layers, task 2 specific layers].
    \item NGDiff computes the full per-task gradients whereas GradNorm only computes the last shared layer's gradients.
    \item NGDiff requires 2 back-propagation at each iteration but GradNorm requires 3 (2 for per-task gradients, 1 for $\nabla_{\bm\theta}L$), which may translate to more training time for large models.
    \item GradNorm introduces additional hyperparameters that can be difficult and costly to tune, and may cause instability of training if not properly tuned. These hyperparameters include $\alpha$ and two learning rates to update $w_\R$ and $w_\F$ in Line 13 of \Cref{app:gradnorm}. In contrast, NGDiff is hyperparameter-free when equipped with GeN (AutoLR).
    \item NGDiff are theoretically supported by \Cref{thm:NGDiff}, while the choice of hyperparameters and the use of a heuristic $r_i(t)$ by GradNorm may require further justification. Here $r_i(t)=\widetilde{L}_i(\bm\theta_t) / \E_{\text{task}}[\widetilde{L}_i(\bm\theta_t)]$ is the "relative inverse training rate" of task $i$, where $\widetilde{L}_i(\bm\theta_t) = L_i(\bm\theta_t) / L_i(\bm\theta_0)$, $i \in \{\F, \R\}$.
\end{itemize}
In summary, NGDiff is remarkably simpler and more well-suited than GradNorm for unlearning, with stable performance and theoretical ground.

\section{Details related to GeN}
\label{app:GeN}
\subsection{Brief introduction of GeN}
GeN \citep{bu2024automatic} is a method that sets the learning rate for any given gradient $\d$ as
$$\eta_\text{GeN}=\frac{\G^\top \d}{\d^\top \H\d}$$
where $\G$ is the gradient and $\H$ is the Hessian matrix of some loss $L$. One only needs to access the scalars $\G^\top\d$ and $\d^\top\H\d$, without computing the high-dimensional $\G$ and $\H$ (or Hessian-vector product). To do so, two additional forward passes are needed: given a constant (say $\xi=0.001$), we compute $L(\bm\theta+\xi\d)$ and $L(\bm\theta-\xi\d)$. Then by curve fitting or finite difference as demonstrated below, we can estimate up to arbitrary precision controlled by $\xi$:
$$\G^\top\d\approx\frac{L(\bm\theta+\xi\d)-L(\bm\theta-\xi\d)}{2\xi}$$
and
$$\d^\top\H\d\approx\frac{L(\bm\theta+\xi\d)-2L(\bm\theta)+L(\bm\theta-\xi\d)}{\xi^2}$$

Notice that the regular optimization requires 1 forward pass and 1 back-propagation; GeN requires in total 3 forward passes and 1 back-propagation. Given that back-propagation costs roughly twice the computation time than forward pass, the total time increases from 3 units of time to 5 units. Nevertheless, GeN needs not to be applied at each iteration: if we update the learning rate every 10 iterations as in \Cref{rem:GeN lazy}, the total time reduces to $3+2/10=3.2$ units, and the overhead is less than 10\% compared to the regular optimization.

\subsection{Adapting GeN to unlearning}
Naively applying GeN to the unlearning will result in
$$\eta_\text{GeN}=\frac{\G^\top\g_\text{UN}}{\g_\text{UN}^\top \H\g_\text{UN}}$$
which minimizes the loss over all datapoints, in both $\F$ and $\R$. This is against our goal to maximize the forgetting loss. We must consider the learning rate separately for $\F$ and $\R$, as shown in \Cref{app:proofs} (Proof of \Cref{thm:NGDiff}). When both losses have a convex curvature in \Cref{fig:parabola}, the optimal learning rate is only well-defined for $L_\R$ and we do not claim to maximize $L_\F$. In other words, if we minimize $L_\R$, we get to worsen $L_\F$ (though not maximally); if we choose to maximize $L_\F$, we will use infinite learning rate that also maximizes $L_\R$. Therefore, our learning rate in \eqref{eq:optim lr} only uses $\R$ instead of the whole dataset.

\section{Computer Vision Experiments}
\label{sec:app_cv_exp}
\begin{table*}[!htb]
    \centering
    \small
    \setlength{\tabcolsep}{6pt}
    \begin{tabular}{c|cc|cc}
        \toprule
        \multirow{2}{*}{\textbf{Method}} & \multicolumn{2}{c|}{\textbf{CIFAR-10}} & \multicolumn{2}{c}{\textbf{CIFAR-100}} \\
	& \textit{Forget Acc} $\downarrow$  & \textit{Retain Acc} $\uparrow$ & \textit{Forget Acc} $\downarrow$  & \textit{Retain Acc} $\uparrow$\\
        \midrule
        \textbf{No-unlearn} & $0.926$ & $0.956$ & $0.745$ & $0.750$  \\\hline
        \textbf{GDiff c=0.9} & $0.000$ & $0.817$ & $0.000$ & $0.664$ \\
        \textbf{GDiff c=0.5} & $0.000$ & $0.830$ & $0.000$ & $0.609$ \\
        \textbf{GDiff c=0.1} & $0.000$ & $0.825$ & $0.000$ & $0.667$ \\
        \textbf{LossNorm} & $0.000$ & $0.753$ & $0.000$ & $0.432$ \\
        \textbf{NGDiff} & $0.000$ & $\textbf{0.931}$ & $0.000$ & $\textbf{0.701}$ \\
        \bottomrule
    \end{tabular}
    \caption{Results of \textit{Forget Acc} and \textit{Retain Acc} using different unlearning methods on the CIFAR-10 dataset. Compared to other baseline methods, \textit{NGDiff} has the best performance on the model utility.}
    \label{table:appendix_cv_experiments}
\end{table*}

To demonstrate the effectiveness of unlearning  across other modalities, we also evaluate our method on the image classification task. Specifically, we choose the CIFAR-10 and CIFAR-100 dataset \cite{krizhevsky2009learning} and train a ResNet-50 \cite{he2015deepresiduallearningimage} model from scratch. For the CIFAR-10 dataset, we sample 500 images from the class $dog$ as the forgetting data, and use images from the remaining 9 classes as the retaining data. For the CIFAR-100 dataset, we sample 500 images from the class $bed$ as the forgetting data, and use images from the remaining 99 classes as the retaining data. After training, the initial forget data accuracy is $0.926$, and the retain data accuracy is $0.956$ on the CIFAR-10 dataset. The initial forget data accuracy is $0.745$, and the retain data accuracy is $0.750$ on the CIFAR-100 dataset. Then we apply different unlearning methods to the trained models. As shown in Table \ref{table:appendix_cv_experiments}, all methods successfully reduce the forget accuracy to $0$. However, the retaining accuracy of \textit{NGDiff} remains the highest, which shows its effectiveness in preserving the model utility in image classification tasks.

\section{Other Related Works}
\label{app:related}

\paragraph{Machine unlearning}
Machine unlearning is oftentimes viewed as a continual learning approach, that removes specific data points after a model has been trained to memorize them. Such removal is light-weighted in contrast to re-training, especially when the forgetting set is much smaller than the retaining. In addition to the methods already introduced in \Cref{section:pareto} (namely GA, GDiff and NPO), other methods include SISA \cite{bourtoule2021machine}, influence functions \cite{ullah2021machine}, differential privacy \cite{gupta2021adaptive} and so on. However, these methods could be difficult to scale on large models and large datasets due to the algorithmic complexity. To our best knowledge, this is the first work that formulate the unlearning problem as a two-task problem, which can be solved by a number of well-known MTO methods.

\paragraph{Multi-task optimization}
MTO is a paradigm where one model is trained to perform multiple tasks simultaneously, so as to significantly improve the efficiency in contrast to training multiple models, one for each task. The key challenge of MTO is the performance trade-off among tasks, where the multi-task model is worse than single-task model if trained on each task separately. Therefore, the core idea is to balance different tasks by modifying the per-task gradients, e.g. with normalization (LossNorm and NGDiff), PCGrad \cite{yu2020gradient}, RLW \cite{linreasonable}, IMTL \cite{liu2021towards}, MGDA \cite{desideri2012multiple}, CAGrad \cite{liu2024conflictaversegradientdescentmultitask}, GradVaccine \cite{wang2020gradientvaccineinvestigatingimproving}, GradDrop \cite{chen2020justpicksignoptimizing}, RotoGrad \cite{javaloy2022rotogradgradienthomogenizationmultitask}, etc.

\paragraph{Learning-rate-free methods}
Parameter-free or learning-rate-free methods automatically set the learning rate scheduler without the hyperparameter tuning, which is computationally infeasible for LLMs, e.g. LLAMA2 pre-training uses 3 hyperparameters just for the learning rate: warmup steps, peak learning rate, and minimum learning rate. At high level, there are two approaches to learning-rate-free methods. 

On one hand, GeN \cite{bu2024automatic} leverages the Taylor expansion and convex-like landscape of deep learning, which is applicable for the general purpose, even if the gradient is modified like in the unlearning.

On the other hand, methods like D-adaptation \cite{defazio2023learningratefreelearningdadaptation}, Prodigy \cite{mishchenko2024prodigyexpeditiouslyadaptiveparameterfree}, DoG \cite{ivgi2023dogsgdsbestfriend}, DoWG  \cite{khaled2024dowgunleashedefficientuniversal} are based on the convex and $G$-Lipschitz conditions: $L(\bar\theta_T)-L(\bm\theta_*)\leq \frac{|\theta_0-\theta_*|^2}{2\eta T}+\frac{\eta G^2}{2}$ where $\theta_*$ is the unknown minimizer of $L$ and $\bar\theta_T$ is an averaging scheme of $\{\theta_0,...,\theta_T\}$. With the same theoretical foundation, these methods propose different ways to approximate the initial-to-final distance $|\theta_0-\theta_*|$. There are two main issues to apply these methods on the unlearning. Firstly, the assumption of $G$-Lipschitz is hard to verify and the minimizer $\theta_*$ is not well-defined in multi-objective (see our discussion on Pareto optimality under \Cref{thm:pareto}). Secondly, the optimal learning rate $\frac{|\theta_0-\theta_*|}{G\sqrt{T}}$ is defined in a manner to minimize the loss, whereas MTO methods operate on the gradient level. Hence MTO is incompatible to such parameter-free methods given that we cannot derive a corresponding loss (e.g. there exists no $L_\textup{NGDiff}$ such that $\frac{\partial L_\textup{NGDiff}}{\partial \theta}=\g_\textup{NGDiff}$).

\section{Experiments}
\label{sec:app_experiment}

\subsection{Datasets}

\label{sec:app_dataset}

\begin{table*}[!htb]
    \centering
    \small
    \setlength{\tabcolsep}{4pt}
    \begin{tabular}{c|ccc|ccccc}
        \toprule
        \multirow{2}{*}{\textbf{Dataset}} & \multicolumn{3}{c|}{\textbf{TOFU}} & \multicolumn{4}{c}{\textbf{MUSE-NEWS}} \\
	& Full & Forget10 & Retain90 & Train & Raw & Verbmem$_{F}$ & Knowmem$_{F}$ & Knowmem$_{R}$ \\
        \midrule
        \# samples & $4,000$ & $400$ & $3,600$ & $7,110$ & $2,669$ & $100$ & $100$ & $100$ \\
        \bottomrule
    \end{tabular}
    \caption{Statistics of the \textit{TOFU} and \textit{MUSE-NEWS} datasets. For the \textit{TOFU} dataset, we use \textit{Full} split for training the target model, \textit{Forget10} and \textit{Retain90} as the forgetting and retaining split for unlearning experiments. For the \textit{MUSE-NEWS} dataset, we utilize \textit{Train} split for training, \textit{Raw} split for unlearning. For evaluation, we use Verbmem$_{F}$ and Knowmem$_{F}$ splits from forgetting data, and Knowmem$_{R}$ split from the retaining data.}
    \label{table:dataset_stats}
\end{table*}
To evaluate the empirical performance of our proposed method, we experiment on the following datasets in \Cref{table:dataset_stats}.
\begin{itemize}
    \item \textit{Task of Fictitious Unlearning (TOFU)} \cite{maini2024tofu}. This dataset consists of question-answer pairs based on fictitious author biographies generated by GPT-4 \cite{achiam2023gpt}. Initially, predefined attributes, such as birthplace, gender, and writing genre, are assigned to 200 distinct authors. GPT-4 is then prompted to generate detailed information about each author. Following the synthesized data, 20 question-answer pairs are created for each fictitious author. The dataset is then divided into distinct datasets: the retaining set and the forgetting set. In our experiments, we use the \textit{forget10} and \textit{retain90} split, which excludes 10\% of the original dataset.
    \item  \textit{MUSE-NEWS} \cite{shi2024muse}. This dataset consists of BBC news articles \cite{li2023avoiding} from August 2023. It includes seven subsets of news data: \textit{raw}, \textit{verbmem}, \textit{knowmem}, \textit{privleak}, \textit{scal}, \textit{sust}, and \textit{train}. We utilize the \textit{train} split to finetune a target model, and then the \textit{raw} set, which includes both the forget and retain data, for the target model unlearning. Then, we use \textit{verbmem}, \textit{knowmem} split to evaluate the unlearned model's performance.
\end{itemize}

\subsection{Evaluation Metrics}

Following the existing work \cite{shi2024muse}, we evaluate the unlearning performance based on the quality of outputs from the model after unlearning. We expect a good performance should satisfy the following requirements:

\textbf{No verbatim memorization} 
We evaluate this metric by prompting the model with the first $l$ tokens of the news data in the forget set and compare the model's continuation outputs with the ground truth continuation. Specifically, for each input $x \in F$, we choose $x_{[:l]}$ as input, and compare the output $f(x_{[:l]})$ with the ground truth continuation $x_{[l+1:]}$ with the ROUGE-L recall score:

\begin{equation}
\begin{split}
&    \textit{Verbmem}(f, F) \\
&= \frac{1}{\|F\|} \Sigma_x \: \textsc{ROUGE-L}(f(x_{[:l]}), x_{[l+1:]})
    \end{split}
\end{equation}

\textbf{No knowledge memorization}
To evaluate this metric, we use the generated question-answer pair based on each example $x \in F$. We prompt the model with the question part $q$ and compare the output answer $f(q)$ to the ground truth answer $a$ using ROUGE-L recall scores:

\begin{equation}
    \textit{Knowmem}(f, F) = \frac{1}{\|F\|} \Sigma_x \: \textsc{ROUGE-L}(f(q), a)
\end{equation}

\textbf{Maintained model utility} An effective unlearning method should also maintain the model's performance on the retain data. For the \textit{MUSE-NEWS} dataset, we use the \textit{Knowmem$_r$} split, which consists of the generated question-answer pairs based on the retain data. For the \textit{TOFU} dataset, we prompt the model with the question from the retain set and compare the generated answer with the ground truth. We use ROUGE-L recall scores for evaluation:

\begin{equation}
    \textit{Utility}(f, R) = \frac{1}{\|R\|} \Sigma_x \: \textsc{ROUGE-L}(f(q), a)
\end{equation}

Additionally, we evaluate the model using the \textit{Retain10-perturbed} split from the \textit{TOFU} dataset. It consists of five perturbed answers for each original answer, keeping original template but modifying the facts. We compute the Truth Ratio metric, which compares the likelihood of the model generating a correct answer versus an incorrect one for each question in the retain set. A higher Truth Ratio indicates better model utility that effectively remembers knowledge from the retain data.

\subsection{Hyper-parameter Settings}
To finetune a targeted model with the full dataset, we use the optimizer Adam with a learning rate of $\eta=\{10^{-5}, 2*10^{-5}\}$, a training batch size of \{16, 32\}, and train 25 epochs for all language models. For the unlearning process, we use the optimizer Adam with a learning rate $\eta=\{10^{-5}, 5*10^{-5}, 10^{-4}\}$, and train 15 epochs for all unlearning methods. For the Phi-1.5, Falcon-1B, and GPT2-XL models, we perform full-model parameter tuning. For the Llama2-7B and Mistral-7B models, we apply the LoRA training method \cite{hulora} with rank = 8.

\subsection{Other unlearning results}
\begin{table*}[!htb]
    \centering
    \small
    \setlength{\tabcolsep}{4pt}
    \begin{tabular}{c|c|ccccccc}
        \toprule
        \multirow{2}{*}{\textbf{Base Model}} 
 &\multirow{2}{*}{\textbf{Metric}} & \multicolumn{7}{c}{\textbf{Method}}\\
	& & \textit{No-unlearn} & \textit{GDiff-0.9} & \textit{GDiff-0.5} & \textit{GDiff-0.1} & \textit{NPO} & \textit{LossNorm} & \textit{NGDiff} \\
        \midrule
         \multirow{3}{*}{Phi-1.5} & Verbmem $\downarrow$ & $0.018$ & $0.000$ & $0.012$ & $0.000$ & $0.000$ & $0.012$ & $\textbf{0.004}$ \\
        & Utility $\uparrow$ & $0.372$ & $0.277$ & $0.061$ & $0.000$ & $0.000$ & $0.061$ & $\textbf{0.001}$ \\
        & Knowmem $\downarrow$ & $0.030$ & $0.000$ & $0.002$ & $0.000$ & $0.000$ & $0.002$ & $\textbf{0.023}$ \\
         \midrule
         \multirow{3}{*}{Falcon-1B} & Verbmem $\downarrow$ & $0.204$ & $0.132$ & $0.000$ & $0.000$ & $0.000$ & $0.126$ & $\textbf{0.000}$ \\
        & Utility $\uparrow$ & $0.386$ & $0.214$ & $0.000$ & $0.000$ & $0.000$ & $0.142$ & $\textbf{0.025}$ \\
        & Knowmem $\downarrow$ & $0.232$ & $0.078$ & $0.000$ & $0.000$ & $0.000$ & $0.130$ & $\textbf{0.087}$ \\
        \bottomrule
    \end{tabular}
    \caption{Results of \textit{Verbatim memorization}, \textit{Model utility}, and \textit{TruthRatio} on \textit{MUSE-NEWS} dataset with different unlearning methods on Phi-1.5, and Falcon-1B models. Lower \textit{Verbmem} along with higher \textit{Utility} and \textit{TruthRatio} indicate a more superior unlearning strategy.}
    \label{table:results_unlearning_muse_appendix}
\end{table*}


More results on \textit{MUSE-NEWS} dataset with Phi-1.5 model and Falcon-1B model are in Table \ref{table:results_unlearning_muse_appendix}.

\begin{figure}[t]
\centering
\includegraphics[width=0.34\linewidth]{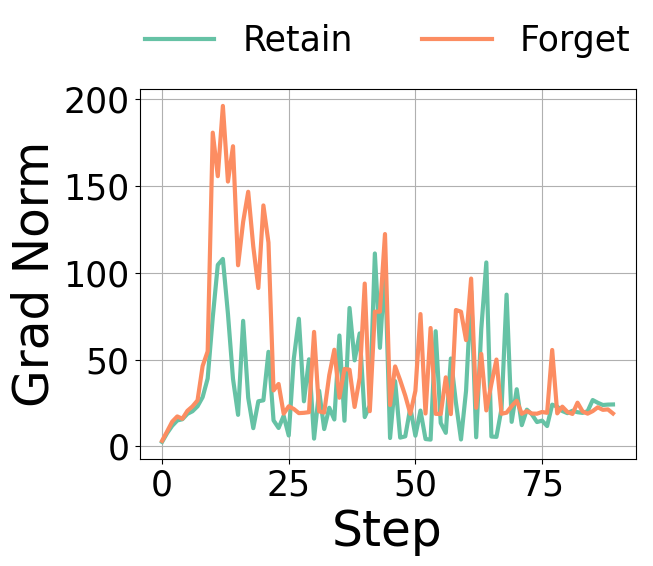}
\includegraphics[width=0.37\linewidth]{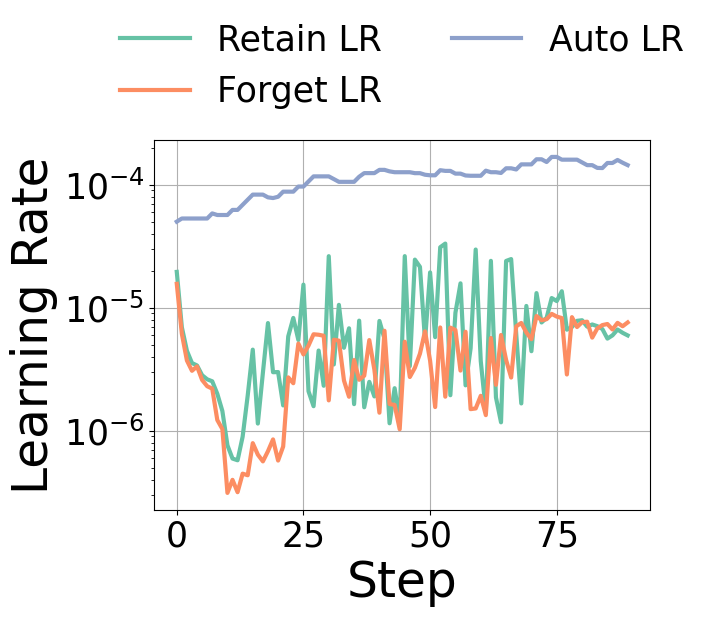}
\caption{Gradient norms and learning rates during the unlearning on \textit{TOFU} dataset using NGDiff and Phi-1.5 model. The AutoLR scheduler assigns a smaller learning rate to the forgetting gradient, effectively preserving model utility on the retaining set.}
\label{fig:monitor}
\end{figure}

\subsection{Visualization of learning rate scheduling} We monitor the gradient norms and the learning rate in \Cref{fig:monitor} for \Cref{alg:auto unlearning}. We observe that the automatic learning rate is indeed effective, picking up from $5e-5$ to around $2e-4$, and that NGDiff assigns a smaller learning rate to the forgetting gradient, not perturbing the model too much to maintain the high utility on the retaining set.

\section{Proofs}
\label{app:proofs}

In this section, we provide the proofs of all the lemma and theorems in this paper. 
\setcounter{theorem}{1}
\begin{lemma}[restated from \cite{xin2022current}]
For any $0<c<1$, the model $\bm\theta^*_\text{LSP}(c)=\text{argmin}_\theta \LSP(\bm\theta; c)$ is Pareto optimal.
\end{lemma}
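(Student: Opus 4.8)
The statement to prove is \Cref{thm:pareto}: for any $0<c<1$, the global minimizer $\bm\theta^*_\text{LSP}(c)\in\argmin_\theta \LSP(\bm\theta; c)$ is Pareto optimal, where $\LSP(\bm\theta; c)=c\cdot L_\R(\bm\theta)-(1-c)\cdot L_\F(\bm\theta)$. The plan is to argue by contradiction, using the fact that $c$ and $1-c$ are both strictly positive coefficients, so that any Pareto improvement strictly decreases the scalarized objective and contradicts global optimality.

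First I would suppose, for contradiction, that $\bm\theta^*:=\bm\theta^*_\text{LSP}(c)$ is \emph{not} Pareto optimal. By \Cref{def:Pareto}, this means there exists some $\bm\theta'$ that dominates $\bm\theta^*$, i.e. $L_\R(\bm\theta^*)\geq L_\R(\bm\theta')$ and $L_\F(\bm\theta^*)\leq L_\F(\bm\theta')$, with at least one of these inequalities strict. Next I would plug $\bm\theta'$ into the scalarized objective and compare:
\begin{align*}
\LSP(\bm\theta^*; c)-\LSP(\bm\theta'; c)
&= c\bigl(L_\R(\bm\theta^*)-L_\R(\bm\theta')\bigr)-(1-c)\bigl(L_\F(\bm\theta^*)-L_\F(\bm\theta')\bigr).
\end{align*}
Since $c>0$ and $L_\R(\bm\theta^*)-L_\R(\bm\theta')\geq 0$, the first term is $\geq 0$; since $1-c>0$ and $L_\F(\bm\theta^*)-L_\F(\bm\theta')\leq 0$, the term $-(1-c)\bigl(L_\F(\bm\theta^*)-L_\F(\bm\theta')\bigr)$ is also $\geq 0$. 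Because at least one of the two original inequalities is strict and the corresponding coefficient ($c$ or $1-c$) is strictly positive, at least one of the two terms is strictly positive, so $\LSP(\bm\theta^*; c)-\LSP(\bm\theta'; c)>0$, i.e. $\LSP(\bm\theta'; c)<\LSP(\bm\theta^*; c)$. This contradicts the assumption that $\bm\theta^*$ globally minimizes $\LSP(\cdot; c)$, completing the proof.

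The argument is essentially a one-line convexity-of-the-cone observation, so there is no serious obstacle; the only point requiring care is the bookkeeping of which inequality is strict and ensuring the matching coefficient is strictly positive (this is exactly why the hypothesis $0<c<1$ is needed — at the endpoints $c=0$ or $c=1$ one of the coefficients vanishes and a Pareto improvement in the other objective would no longer decrease the scalarization). I would also note in passing that this proof only uses global minimality and makes no convexity or smoothness assumptions on $L_\R, L_\F$, which is consistent with the footnote in the excerpt cautioning that the result applies to the global minimum of LSP rather than to the iterates $\bm\theta_t$.
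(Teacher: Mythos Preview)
Your proof is correct and follows essentially the same approach as the paper: assume $\bm\theta^*$ is dominated by some $\bm\theta'$, then use the strict positivity of $c$ and $1-c$ to conclude $\LSP(\bm\theta';c)<\LSP(\bm\theta^*;c)$, contradicting global minimality. The paper's version is just more terse, compressing your two-term analysis into the single displayed inequality $cL_\R(\bm\theta^*)-(1-c)L_\F(\bm\theta^*)>cL_\R(\bm\theta')-(1-c)L_\F(\bm\theta')$.
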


\begin{proof}[Proof of \Cref{thm:pareto}]
We show that the solution of LSP cannot be a dominated point, and therefore it must be Pareto optimal. Consider a solution $\bm\theta^* = \text{argmin}_\theta \LSP(\bm\theta; c)$, and suppose it is dominated by some $\bm\theta'$, i.e. $L_\F(\bm\theta^*)\leq L_\F(\bm\theta'),L_\R(\bm\theta^*)\geq L_\R(\bm\theta')$ with at least one inequality being strict. This contradicts that $\bm\theta^*$ is minimal as 
$cL_\R(\bm\theta^*)-(1-c)L_\F(\bm\theta^*)>cL_\R(\bm\theta')-(1-c)L_\F(\bm\theta').$
\end{proof}

\begin{theorem}
\label{thm:pareto!}
For any $\{c_t\}$ with $0\leq c_t\leq 1$ that converges as $t\to\infty$, the model $\bm\theta^*(\{c_t\})=\lim_{t\to\infty} \bm\theta_t$ in \eqref{eq:dynamic gradient} is Pareto optimal.
\end{theorem}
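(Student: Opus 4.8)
The plan is to reduce \Cref{thm:pareto!} to the static result \Cref{thm:pareto} by a limiting argument. Let $c^*:=\lim_{t\to\infty}c_t\in[0,1]$, which exists by hypothesis, and $\bm\theta^*:=\lim_{t\to\infty}\bm\theta_t$. The first step is to show that $\bm\theta^*$ is a stationary point of the \emph{static} scalarized objective with weight $c^*$, i.e.\ $\g_\text{UN}(\bm\theta^*;c^*)=c^*\g_\R(\bm\theta^*)-(1-c^*)\g_\F(\bm\theta^*)=\bm{0}$. Since $\{\bm\theta_t\}$ converges, its increments $\bm\theta_{t+1}-\bm\theta_t=-\eta_t\,\g_\text{UN}(\bm\theta_t;c_t)$ tend to $\bm{0}$; assuming the step sizes stay bounded away from $0$ (as in our AutoLR schedule, where $\eta_t$ is periodically refreshed but never driven to zero), this forces $\g_\text{UN}(\bm\theta_t;c_t)\to\bm{0}$, and joint continuity of $(\bm\theta,c)\mapsto c\,\g_\R(\bm\theta)-(1-c)\,\g_\F(\bm\theta)$ then yields $\g_\text{UN}(\bm\theta^*;c^*)=\bm{0}$. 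Equivalently, one may simply adopt the gradient-level notion of the limiting optimum recorded after \eqref{eq:dynamic gradient}, in which case this step holds by definition.

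Given stationarity, the second step is to promote $\bm\theta^*$ to a minimizer of $\LSP(\cdot\,;c^*)$ and then rerun the contradiction from the proof of \Cref{thm:pareto}. Concretely, suppose $0<c^*<1$ and that $\bm\theta^*$ is dominated by some $\bm\theta'$, so $L_\R(\bm\theta^*)\ge L_\R(\bm\theta')$ and $L_\F(\bm\theta^*)\le L_\F(\bm\theta')$ with at least one strict. Since $c^*>0$ and $1-c^*>0$, this yields $\LSP(\bm\theta^*;c^*)>\LSP(\bm\theta';c^*)$, contradicting that $\bm\theta^*$ minimizes $\LSP(\cdot\,;c^*)$; hence no dominating $\bm\theta'$ exists and $\bm\theta^*$ is Pareto optimal. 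This covers NGDiff: its fixed point is exactly where $\g_\R/\|\g_\R\|=\g_\F/\|\g_\F\|$, which is stationarity of $\LSP(\cdot\,;c^*)$ for the weight $c^*=\tfrac{1/\|\g_\R\|}{1/\|\g_\R\|+1/\|\g_\F\|}\in(0,1)$ whenever both per-task gradients are nonzero.

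The main obstacle — and the source of the ``under some assumptions'' caveat — is bridging the gap between \emph{stationary point} and \emph{global minimizer} of $\LSP(\cdot\,;c^*)$, which \Cref{thm:pareto} genuinely requires: without convexity of $L_\R$ and $L_\F$ this upgrade can fail, so one either assumes (local) convexity along the relevant directions, or reads both $\bm\theta^*$ and \Cref{thm:pareto} at the gradient-stationarity level. A secondary issue is the boundary weights $c^*\in\{0,1\}$: for instance $c^*=1$ only makes $\bm\theta^*$ a minimizer of $L_\R$, which need not be Pareto optimal, since a competitor could match $L_\R$ and strictly improve $L_\F$. I would handle this by restricting the statement to $c^*\in(0,1)$ — which still captures GDiff, LossNorm, RLW, PCGrad, IMTL-G and NGDiff — or by a perturbation argument showing that nearby weights $c^*\pm\delta\in(0,1)$ share (approximately) the same limit point. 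Everything else is the one-line inequality chase inherited from \Cref{thm:pareto}.
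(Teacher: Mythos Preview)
Your approach is essentially the same as the paper's: pass to the limit $c^*=\lim_t c_t$, argue that $\g_\text{UN}(\bm\theta^*;c^*)=\bm{0}$ from convergence of the iterates, identify $\bm\theta^*$ with the static LSP solution at weight $c^*$, and invoke \Cref{thm:pareto}. The paper's proof is two sentences and does exactly this, though it is less explicit than you are about the step-size assumption needed to pass from $\bm\theta_{t+1}-\bm\theta_t\to\bm0$ to $\g_\text{UN}\to\bm0$.

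Where you go further than the paper is in honestly flagging the two soft spots. First, the paper simply asserts that $\bm\theta^*$ ``is equivalent to the LSP solution $\bm\theta^*_\text{LSP}(c)$ as the latter has the same stationary condition,'' which silently conflates stationarity with global minimization; you correctly note that \Cref{thm:pareto} is stated for $\argmin$ and that bridging the gap requires either a convexity assumption or reading everything at the stationarity level (the paper's footnote 3 gestures at this limitation). Second, the theorem allows $0\le c_t\le 1$ while \Cref{thm:pareto} only covers $0<c<1$; the paper does not address the boundary case $c^*\in\{0,1\}$ at all, and your observation that a pure $L_\R$-minimizer need not be Pareto optimal is a genuine edge case the paper overlooks. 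So your proposal matches the paper's argument and is, if anything, more careful about its hidden assumptions.
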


\begin{proof}[Proof of \Cref{thm:pareto!}]
Let $c=\lim_t c_t$, then Eq. \eqref{eq:dynamic gradient} gives that $\g_\text{UN}(\bm\theta_t)=c_t\g_\R(\bm\theta_t)-(1-c_t)\g_\F(\bm\theta_t)\to c\g_\R(\bm\theta^*)-(1-c)\g_\F(\bm\theta^*)=\bm{0}$ as $t\to\infty$. Note $\bm\theta^*(\{c_t\})$ is equivalent to the LSP solution $\bm\theta^*_\text{LSP}(c)=\text{argmin}_\theta \LSP(\bm\theta; c)$ as the latter has the same stationary condition, which is Pareto optimal by \Cref{thm:pareto}.
\end{proof}

\begin{lemma}
$\g_\textup{NGDiff}(\g_\R,\g_\F)$ satisfies Eq. \eqref{eq:sanity check} for any $\g_\R\in\mathbb{R}^d$ and $\g_\F\in\mathbb{R}^d$. 
\label{fact:gnorm good}
\end{lemma}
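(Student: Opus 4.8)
The plan is to just expand the two inner products in Eq.~\eqref{eq:sanity check} and invoke the Cauchy--Schwarz inequality. Assuming $\g_\R\neq\bm 0$ and $\g_\F\neq\bm 0$ (otherwise the normalized directions are undefined and the statement is vacuous), write $\g_\textup{NGDiff}=\g_\R/\|\g_\R\|-\g_\F/\|\g_\F\|$ and compute
\begin{align*}
\g_\R^\top\g_\textup{NGDiff}
&=\frac{\g_\R^\top\g_\R}{\|\g_\R\|}-\frac{\g_\R^\top\g_\F}{\|\g_\F\|}
=\|\g_\R\|-\frac{\g_\R^\top\g_\F}{\|\g_\F\|},\\
\g_\F^\top\g_\textup{NGDiff}
&=\frac{\g_\F^\top\g_\R}{\|\g_\R\|}-\frac{\g_\F^\top\g_\F}{\|\g_\F\|}
=\frac{\g_\R^\top\g_\F}{\|\g_\R\|}-\|\g_\F\|.
\end{align*}

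By Cauchy--Schwarz, $\g_\R^\top\g_\F\le\|\g_\R\|\,\|\g_\F\|$. Dividing by $\|\g_\F\|>0$ gives $\g_\R^\top\g_\F/\|\g_\F\|\le\|\g_\R\|$, hence $\g_\R^\top\g_\textup{NGDiff}\ge 0$. Dividing instead by $\|\g_\R\|>0$ gives $\g_\R^\top\g_\F/\|\g_\R\|\le\|\g_\F\|$, hence $\g_\F^\top\g_\textup{NGDiff}\le 0$. Chaining these two yields $\g_\R^\top\g_\textup{NGDiff}\ge 0\ge\g_\F^\top\g_\textup{NGDiff}$, which is exactly Eq.~\eqref{eq:sanity check}. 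The positive/negative correlation statement then follows since, e.g., $\g_\R^\top\g_\textup{NGDiff}\ge0$ means the cosine of the angle between $\g_\R$ and $\g_\textup{NGDiff}$ is nonnegative.

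There is essentially no obstacle here — the only subtlety worth flagging is the equality case: Cauchy--Schwarz is tight precisely when $\g_\R$ and $\g_\F$ are positively parallel, in which case $\g_\textup{NGDiff}=\bm 0$ and both inner products vanish; this is the degenerate situation already excluded in part~(1) of \Cref{thm:NGDiff} ("unless $\g_\R$ is exactly parallel to $\g_\F$"), so it is consistent to note it but not necessary to belabor. I would keep the written proof to three or four lines.
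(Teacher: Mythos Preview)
Your proof is correct and follows essentially the same approach as the paper's own proof: expand the inner products and apply Cauchy--Schwarz to bound $\g_\R^\top\g_\F\le\|\g_\R\|\|\g_\F\|$. Your additional remarks on the degenerate cases (zero gradients, positively parallel gradients) are a nice clarification but not strictly needed for the lemma as stated.
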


\begin{proof}[Proof of \Cref{fact:gnorm good}]
We firstly show $\g_\R^\top\g_\text{UN}\geq 0$ for $\g_\text{UN}=\g_\textup{NGDiff}$. We write
\begin{equation*}
\begin{split}
\g_\R^\top\g_\textup{NGDiff}&=\g_\R^\top\left(
\frac{\g_\R}{\|\g_\R\|}-\frac{\g_\F}{\|\g_\F\|}\right) \\ & =\|\g_\R\|-\frac{\g_\R^\top\g_\F}{\|\g_\F\|}\geq \|\g_\R\|-\frac{\|\g_\R\|\|\g_\F\|}{\|\g_\F\|} \\ & =0
\end{split}    
\end{equation*}
where the inequality is the Cauchy-Schwarz inequality. Similarly, $\g_\F^\top\g_\text{UN}\leq 0$ easily follows.
\end{proof}

\begin{theorem}
Consider $\bm\theta_{t+1}=\bm\theta_t-\eta \g_\textup{NGDiff}$. 

(1) Unless $\g_\R$ is exactly parallel to $\g_\F$, for any sufficiently small learning rate $\eta$, there exist two constants $\epsilon_{\R,1}=o(\eta), \epsilon_{\F,1}=o(\eta)$ such that
$$L_\R(\bm\theta_{t+1})-L_\R(\bm\theta_{t})< \epsilon_{\R,1};$$ $$L_\F(\bm\theta_{t+1})-L_\F(\bm\theta_{t})> \epsilon_{\F,1}.$$

(2) If additionally $\g_\textup{NGDiff}^\top\H_\R\g_\textup{NGDiff}>0$ and $\g_\textup{NGDiff}^\top\H_\F\g_\textup{NGDiff}>0$, then for any learning rate $0<\eta<\frac{2\g_\R^\top\g_\textup{NGDiff}}{\g_\textup{NGDiff}^\top \H_\R \g_\textup{NGDiff}}$, 
there exist two constants $\epsilon_{\R,2}=o(\eta^2), \epsilon_{\F,2}=o(\eta^2)$ such that 
$$L_\R(\bm\theta_{t+1})-L_\R(\bm\theta_{t})< \epsilon_{\R,2}; L_\F(\bm\theta_{t+1})-L_\F(\bm\theta_{t})> \epsilon_{\F,2}.$$
\label{thm:NGDiff}
\end{theorem}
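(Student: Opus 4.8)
The plan is to feed the NGDiff step $\bm\theta_{t+1}=\bm\theta_t-\eta\g_\textup{NGDiff}$ into the second‑order Taylor expansion already recorded in Eq.~\eqref{eq: landscape} (with $\g_\text{UN}=\g_\textup{NGDiff}$), to take the $\epsilon$'s to be precisely the Taylor remainders, and then to pin down the sign of the remaining leading terms: by \Cref{fact:gnorm good} for part (1), and by a one‑variable parabola argument for part (2). No new machinery beyond Eq.~\eqref{eq: landscape} and \Cref{fact:gnorm good} is needed.

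For part (1), expand to first order for $\omega\in\{\R,\F\}$:
\begin{align*}
L_\R(\bm\theta_{t+1})-L_\R(\bm\theta_{t}) &= -\eta\,\g_\R^\top\g_\textup{NGDiff}+\epsilon_{\R,1}, \\
L_\F(\bm\theta_{t+1})-L_\F(\bm\theta_{t}) &= -\eta\,\g_\F^\top\g_\textup{NGDiff}+\epsilon_{\F,1},
\end{align*}
where $\epsilon_{\R,1},\epsilon_{\F,1}$ collect the quadratic and higher‑order terms and hence are $o(\eta)$. By \Cref{fact:gnorm good} we have $\g_\R^\top\g_\textup{NGDiff}\ge 0\ge\g_\F^\top\g_\textup{NGDiff}$, and inspecting its proof the Cauchy--Schwarz step is strict exactly when $\g_\R$ is not parallel to $\g_\F$, so under that hypothesis $\g_\R^\top\g_\textup{NGDiff}>0$ and $\g_\F^\top\g_\textup{NGDiff}<0$. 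Then $-\eta\,\g_\R^\top\g_\textup{NGDiff}<0$ gives $L_\R(\bm\theta_{t+1})-L_\R(\bm\theta_{t})<\epsilon_{\R,1}$, and $-\eta\,\g_\F^\top\g_\textup{NGDiff}>0$ gives $L_\F(\bm\theta_{t+1})-L_\F(\bm\theta_{t})>\epsilon_{\F,1}$.

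For part (2), keep the quadratic term and set, for $\omega\in\{\R,\F\}$, the scalar parabola $q_\omega(\eta):=-\eta\,\g_\omega^\top\g_\textup{NGDiff}+\tfrac{\eta^2}{2}\,\g_\textup{NGDiff}^\top\H_\omega\g_\textup{NGDiff}$, so that Eq.~\eqref{eq: landscape} reads $L_\omega(\bm\theta_{t+1})-L_\omega(\bm\theta_{t})=q_\omega(\eta)+\epsilon_{\omega,2}$ with $\epsilon_{\omega,2}=o(\eta^2)$. Since $\g_\textup{NGDiff}^\top\H_\R\g_\textup{NGDiff}>0$, the parabola $q_\R$ opens upward, satisfies $q_\R(0)=0$, and has second root $\eta^\star=\tfrac{2\g_\R^\top\g_\textup{NGDiff}}{\g_\textup{NGDiff}^\top\H_\R\g_\textup{NGDiff}}$ (positive by non‑parallelism), so $q_\R(\eta)<0$ for every $\eta\in(0,\eta^\star)$; this yields $L_\R(\bm\theta_{t+1})-L_\R(\bm\theta_{t})<\epsilon_{\R,2}$. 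For $\omega=\F$, \Cref{fact:gnorm good} gives $-\eta\,\g_\F^\top\g_\textup{NGDiff}\ge 0$ and the hypothesis gives $\tfrac{\eta^2}{2}\,\g_\textup{NGDiff}^\top\H_\F\g_\textup{NGDiff}>0$, hence $q_\F(\eta)>0$ for all $\eta>0$ and $L_\F(\bm\theta_{t+1})-L_\F(\bm\theta_{t})>\epsilon_{\F,2}$.

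The proof is essentially bookkeeping once Eq.~\eqref{eq: landscape} and \Cref{fact:gnorm good} are available; the one place that genuinely needs attention is the non‑parallel hypothesis, which is exactly what upgrades the Cauchy--Schwarz inequality inside \Cref{fact:gnorm good} to a strict one (equivalently, what guarantees $\g_\textup{NGDiff}\neq\bm{0}$ and $\eta^\star>0$) — without it the leading term vanishes and neither descent on $L_\R$ nor ascent on $L_\F$ can be asserted. A secondary, purely cosmetic subtlety is that the $o(\cdot)$ bounds are asymptotic as $\eta\to0$ whereas part (2) is stated for a fixed interval; the clean reading is to let $\epsilon_{\omega,2}$ denote the exact Taylor remainder of $L_\omega$ along the step, which is $o(\eta^2)$ and for which the displayed strict inequalities hold verbatim on $(0,\eta^\star)$.
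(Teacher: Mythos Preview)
Your proposal is correct and follows essentially the same route as the paper's own proof: feed the NGDiff update into the Taylor expansion \eqref{eq: landscape}, invoke \Cref{fact:gnorm good} (with strictness from non-parallelism) for the first-order sign in part~(1), and analyze the scalar quadratic in $\eta$ for part~(2). Your write-up is in fact slightly more careful than the paper's in naming the $\epsilon$'s as the exact Taylor remainders and in flagging the asymptotic-vs.-fixed-interval reading of $o(\eta^2)$.
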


\begin{proof}[Proof of \Cref{thm:NGDiff}]
Applying \eqref{eq: landscape} with $\g_\textup{NGDiff}$ gives
\begin{align}
&L_\R(\bm\theta_{t+1})-L_\R(\bm\theta_{t})
= -\eta \g_\R^\top\g_\textup{NGDiff}
+\frac{\eta^2}{2}\g_\textup{NGDiff}^\top \H_\R \g_\textup{NGDiff}+o(\eta^2) 
\label{eq:taylor NGD}
\end{align}

For part (1), note that \Cref{fact:gnorm good} gives $\g_\R^\top\g_\text{NGDiff}>0$ unless $\g_\F\parallel\g_\R$. Hence for any $\eta>0$, we have
$$
L_\R(\bm\theta_{t+1})-L_\R(\bm\theta_{t})= -\eta \g_\R^\top\g_\textup{NGDiff}
+o(\eta)<o(\eta)
$$
and similarly for $L_\F$.

For part (2), now that $\g_\textup{NGDiff}^\top \H_\R \g_\textup{NGDiff}>0$, 
we have
\begin{equation*}
\begin{split}
&-\eta\g_\R^\top\g_\textup{NGDiff}
+\frac{\eta^2}{2}\g_\textup{NGDiff}^\top \H_\R \g_\textup{NGDiff}< 0
 \Longleftrightarrow 
0 < \eta < \frac{2\g_\R^\top\g_\textup{NGDiff}}{\g_\textup{NGDiff}^\top \H_\R \g_\textup{NGDiff}}
\end{split}
\end{equation*}
and similarly
$$-\eta\g_\F^\top\g_\textup{NGDiff}
+\frac{\eta^2}{2}\g_\textup{NGDiff}^\top \H_\F \g_\textup{NGDiff}> 0 
\Longleftarrow
0<\eta$$
We complete the proof by substituting the inequalities into \eqref{eq:taylor NGD}.
\end{proof}

\begin{remark}
\label{rem:cost}
There is a computational overhead to use GeN, as it requires additional forward passes to estimate $\eta_t^*$. Nevertheless, we only update the learning rate every 10 iterations so that the overhead is amortized and thus negligible.
\begin{proof}[Proof of \Cref{rem:cost}]
We extend our original complexity analysis in Remark \ref{rem:GeN lazy} and provide quantitative analysis of computation overheads in FLOPs (floating point operations). Specifically, assume it takes around N FLOPS to perform one forward pass on one example and 2N FLOPS to back-propagate. The basic GDiff requires roughly 6BTN FLOPS to run with batch size B and total number of iterations T, because it needs 1 forward and 1 backward for the retain set and another for the forget set. Our learning-rate-free NGDiff requires about (6BTN+4BTN/10), hence a 6.6\% increase to unlearn the data. We assure that the extra forward passes do not add memory burden, because they are in gradient-free mode (e.g. under \texttt{torch.no\_grad()} mode). All in all, NGDiff is almost as efficient as GDiff and other unlearning methods, as the target unlearning corpus is usually relatively small. This has been empirically observed by our large-scale (up to 7B) model unlearning experiments.
\end{proof}
\end{remark}

\end{document}